\def\eqref#1{equation~\ref{#1}}
\def\1{\bm{1}}
\def\vtheta{{\bm{\theta}}}
\def\vd{{\bm{d}}}
\def\vg{{\bm{g}}}
\def\vk{{\bm{k}}}
\def\vm{{\bm{m}}}
\def\vp{{\bm{p}}}
\def\vv{{\bm{v}}}
\def\vx{{\bm{x}}}
\def\vdelta{{\boldsymbol{\delta}}}
\DeclareMathAlphabet{\mathsfit}{\encodingdefault}{\sfdefault}{m}{sl}
\SetMathAlphabet{\mathsfit}{bold}{\encodingdefault}{\sfdefault}{bx}{n}
\def\gG{{\mathcal{G}}}
\def\gL{{\mathcal{L}}}
\def\gS{{\mathcal{S}}}
\def\sR{{\mathbb{R}}}
\theoremstyle{plain}
\newtheorem{theorem}{Theorem}[section]
\theoremstyle{definition}
\theoremstyle{remark}
\newcommand*\colourcheck[1]{%
  \expandafter\newcommand\csname #1check\endcsname{\textcolor{#1}{\ding{52}}}%
}
\newcommand*\colourcross[1]{%
  \expandafter\newcommand\csname #1cross\endcsname{\textcolor{#1}{\ding{56}}}%
}
\newcommand{\green}{\textcolor{green}}
\newcommand{\red}{\textcolor{red}}
\begin{document}

\title{Sparse-PGD: A Unified Framework for Sparse Adversarial Perturbations Generation}
\author{\IEEEauthorblockN{Xuyang Zhong\IEEEauthorrefmark{2}, Chen Liu \IEEEauthorrefmark{2}\IEEEauthorrefmark{1}}
        
        \IEEEauthorblockA{\IEEEauthorrefmark{2} City University of Hong Kong\\
xuyang.zhong@my.cityu.edu.hk, chen.liu@cityu.edu.hk}
\thanks{\IEEEauthorrefmark{1} denotes the correspondence author.}
\thanks{Manuscript received 11/2024; revised 09/2025; accepted 11/2025.}}

\markboth{IEEE Transactions on Pattern Analysis and Machine Intelligence}{}


\maketitle

\begin{abstract}
    This work studies sparse adversarial perturbations, including both unstructured and structured ones. We propose a framework based on a white-box PGD-like attack method named Sparse-PGD to effectively and efficiently generate such perturbations. Furthermore, we combine Sparse-PGD with a black-box attack to comprehensively and more reliably evaluate the models' robustness against unstructured and structured sparse adversarial perturbations. Moreover, the efficiency of Sparse-PGD enables us to conduct adversarial training to build robust models against various sparse perturbations. Extensive experiments demonstrate that our proposed attack algorithm exhibits strong performance in different scenarios. More importantly, compared with other robust models, our adversarially trained model demonstrates state-of-the-art robustness against various sparse attacks. Codes are available at \href{https://github.com/CityU-MLO/sPGD}{https://github.com/CityU-MLO/sPGD}.
\end{abstract}

\begin{IEEEkeywords}
Adversarial Attack. Sparse Perturbation.
\end{IEEEkeywords}

\section{Introduction}

Deep learning has been developing tremendously fast in the last decade. However, it is shown vulnerable to adversarial attacks: imperceivable adversarial perturbations \citep{2013Intriguing, 2016Adversarial} could change the prediction of a model without altering the input's semantic content, which poses great challenges in safety-critical systems.
Among different kinds of adversarial perturbations, the ones bounded by $l_\infty$ or $l_2$ norms are mostly well-studied \citep{2014Explaining, madry2017towards, 2019Theoretically} and benchmarked~\citep{croce2020robustbench}, because these adversarial budgets, i.e., the sets of all allowable perturbations, are convex, which facilitates theoretical analyses and algorithm design.
By contrast, we study \textit{sparse perturbations} in this work, including both \textit{unstructured} and \textit{structured} ones which are bounded by $l_0$ norm and group $l_0$ norm, respectively.
These perturbations are quite common in physical scenarios, including broken pixels in LED screens to fool object detection models and adversarial stickers on road signs to make an auto-driving system fail \citep{papernot2017practical, Akhtar2018ThreatOA, Xu2019AdversarialAA,feng2022graphite,wei2023unified}.

However, constructing such adversarial perturbations is challenging as the corresponding adversarial budget is non-convex.
Therefore, gradient-based methods, such as projected gradient descent (PGD)~\citep{madry2017towards}, usually cannot efficiently obtain a strong adversarial perturbation. 
In this regard, existing methods to generate sparse perturbations~\cite{2018SparseFool, croce2019sparse, su2019one, 2020GreedyFool, rao2020adversarial, croce2022sparse} either cannot control the $l_0$ norm or the group $l_0$ norm of perturbations or have prohibitively high computational complexity, which makes them inapplicable for adversarial training to obtain robust models against sparse perturbations.
The perturbations bounded by $l_1$ norm are the closest scenario to $l_0$ bounded perturbations among convex adversarial budgets defined by an $l_p$ norm.
Nevertheless, adversarial training in this case~\citep{tramer2019adversarial, croce2021mind} still suffers from issues such as slow convergence and instability. \citet{jiang2023towards} demonstrates that these issues arise from non-sparse perturbations bounded by $l_1$ norm.
In other words, $l_1$ adversarial budget cannot guarantee the sparsity of the perturbations. 
Furthermore, existing works investigating structured sparse perturbations \citep{brown2017adversarial, karmon2018lavan, yang2020patch, rao2020adversarial, croce2022sparse} only support generating a single adversarial patch, lacking flexibility and generality. Thus, it is necessary but challenging to develop a unified framework for both unstructured and structured sparse adversarial perturbations.

In this work, we propose a white-box attack named Sparse-PGD (sPGD) to effectively and efficiently generate unstructured and structured sparse perturbations.
For unstructured sparse perturbations, we decompose the perturbation $\bm{\delta}$ as the product of a magnitude tensor $\vp$ and a binary sparse mask $\vm$: $\bm{\delta} = \vp \odot \vm$ , where $\vp$ and $\vm$ determine the magnitudes and the locations of perturbed features, respectively.
Although $\vp$ can be updated by PGD-like methods, it is challenging to directly optimize the binary mask $\vm$ in the discrete space. We thereby introduce an alternative continuous variable $\widetilde{\vm}$ to approximate $\vm$ and update $\widetilde{\vm}$ by gradient-based methods, $\widetilde{\vm}$ is then transformed to $\vm$ by projection to the discrete space.
To further boost the performance, we propose the unprojected gradient of $\vp$ and random reinitialization mechanism.
For structured sparse perturbations, we study group $l_0$ norm and its approximated version based on the group norm proposed in \citep{bach2010structured}. The structured sparse perturbation bounded by approximated group $l_0$ norm can be thus decomposed as $\vdelta=\vp \odot \vm =\vp \odot \min\left(\mathrm{TConv}(\vv, \vk), 1\right)$, where the binary group mask $\vv$ determines the positions of the groups to be perturbed and the customized binary kernel $\vk$ determines the pattern of groups. That is to say, we decompose the structured sparse perturbations into their positions and the patterns. Furthermore, we leverage transposed convolution ($\mathrm{TConv}$) and clipping operations to map the positions of the groups $\vv$ to the positions of perturbed features $\vm$. 
Similar to the unstructured cases, we introduce the continuous $\widetilde{\vv}$ and update it by gradient-based methods.
Ultimately, we manage to transform the problem of optimizing the structured sparse mask $\vm$ into the problem of optimizing the group mask $\vv$ bounded by $l_0$ norm, which can be resolved in the framework of sPGD. The whole pipeline is illustrated in Figure \ref{fig:pipeline}.
On top of sPGD, we propose Sparse-AutoAttack (sAA), which is the ensemble of the white-box sPGD and another black-box sparse attack, for a more comprehensive and reliable evaluation against both unstructured and structured sparse perturbations.
Through extensive experiments, we show that our method exhibits better performance than other attacks.

More importantly, we explore adversarial training to obtain robust models against sparse attacks. 
In this context, the attack method will be called in each mini-batch update, so it should be both effective and efficient.
Compared with existing methods, our proposed sPGD performs much better when using a small number of iterations, making it feasible for adversarial training and its variants~\cite{Zhang2019TheoreticallyPT}. Empirically, models adversarially trained by sPGD demonstrate the strongest robustness against various sparse attacks.

We summarize the contributions of this paper as follows:
\begin{enumerate}
    \item We propose an effective and efficient white-box attack algorithm named Sparse-PGD (sPGD) which can be utilized to generate both unstructured and structured sparse adversarial perturbations.

    \item sPGD achieves the best performance among white-box sparse attacks. We then combine it with a black-box sparse attack to construct Sparse-AutoAttack (sAA) for more comprehensive robustness evaluation against sparse adversarial perturbations.

    \item sPGD achieves much better performance in the regime of limited iterations, it is then adopted for adversarial training. Extensive experiments demonstrate that models adversarially trained by sPGD have significantly stronger robustness against various sparse attacks.
\end{enumerate}

\textbf{Preliminaries:}
We use image classification as an example, although the proposed methods are applicable to any classification model.
Under $l_p$ bounded perturbations, the robust learning aims to solve the following min-max optimization problem.
\begin{equation} \label{eq:adv}
\begin{split}
    \mathop{\min}\limits_{\bm{\theta}}\frac{1}{N}\sum_{i=1}^{N}\mathop{\max}\limits_{\bm{\delta}_i} \mathcal{L}(\bm{\theta}, \vx_i+\bm{\delta}_i),\\
    \mathrm{s.t.}\ ||\bm{\delta}_i||_p \leq \epsilon,\ 0\leq \vx_i+\bm{\delta}_i \leq 1.
\end{split}  
\end{equation}
where $\bm{\theta}$ denotes the parameters of the model and $\gL$ is the loss objective function. $\vx_i \in \mathbb{R}^{h\times w\times c}$ is the input image where $h$, $w$ and $c$ represent the height, width, and number of channels, respectively. $\bm{\delta}_i$ has the same shape as $\vx_i$ and represents the perturbation.
The perturbations are constrained by its $l_p$ norm and the bounding box. In this regard, we use the term \textit{adversarial budget} to represent the set of all allowable perturbations.
Adversarial attacks focus on the inner maximization problem of (\ref{eq:adv}) and aim to find the optimal adversarial perturbation, while adversarial training focuses on the outer minimization problem of (\ref{eq:adv}) and aims to find a robust model parameterized by $\bm{\theta}$.
Due to the high dimensionality and non-convexity of the loss function when training a deep neural network, \cite{Weng2018TowardsFC} has proven that solving the problem (\ref{eq:adv}) is at least NP-complete.

We consider the pixel sparsity for image inputs in this work, which is more meaningful than feature sparsity and consistent with existing works~\citep{croce2019sparse, croce2022sparse}. That is, a pixel is considered perturbed if \textit{any} of its channel is perturbed, and sparse perturbation means few pixels are perturbed.

\section{Related Works}
\textbf{Non-sparse Attacks:} 
The pioneering work \cite{2013Intriguing} finds the adversarial perturbations to fool image classifiers and proposes a method to minimize the $l_2$ norm of such perturbations.
To more efficiently generate adversarial perturbations, the fast gradient sign method (FGSM) \cite{2014Explaining} generates $l_\infty$ perturbation in one step, but its performance is significantly surpassed by the multi-step variants~\cite{kurakin2017adversarial}.
Projected Gradient Descent (PGD) \citep{madry2017towards} further boosts the attack performance by using iterative updating and random initialization. 
Specifically, each iteration of PGD updates the adversarial perturbation $\bm{\delta}$ by:
\begin{equation} \label{eq:pgd}
\bm{\delta} \longleftarrow \Pi_{\gS}(\bm{\delta} + \alpha \cdot s(\nabla_{\bm{\delta}}\mathcal{L}(\vtheta, \vx+\bm{\delta})))    
\end{equation}
where $\gS$ is the adversarial budget, $\alpha$ is the step size, $s: \mathbb{R}^{h\times w\times c} \rightarrow \mathbb{R}^{h\times w\times c}$ selects the steepest ascent direction based on the gradient of the loss $\mathcal{L}$ with respect to the perturbation.
Inspired by the first-order Taylor expansion, \citet{madry2017towards} derives the steepest ascent direction for $l_2$ bounded and $l_\infty$ bounded perturbations to efficiently find strong adversarial examples; SLIDE \citep{tramer2019adversarial} and $l_1$-APGD \citep{croce2021mind} use $k$-coordinate ascent to construct $l_1$ bounded perturbations, which is shown to suffer from the slow convergence~\citep{jiang2023towards}. 

Besides the attacks that have access to the gradient of the input (i.e., white-box attacks), there are black-box attacks that do not have access to model parameters, including the ones based on gradient estimation through finite differences \citep{Bhagoji2018PracticalBA, Ilyas2018BlackboxAA, Ilyas2018PriorCB, Tu2018AutoZOOMAZ, Uesato2018AdversarialRA} and the ones based on evolutionary strategies or random search \citep{Alzantot2018GenAttackPB, Guo2019SimpleBA}. To improve the query efficiency of these attacks, \cite{AlDujaili2019ThereAN, Moon2019ParsimoniousBA, Meunier2019YetAB, Andriushchenko2019SquareAA} generate adversarial perturbation at the corners of the adversarial budget.

To more reliably evaluate the robustness, \citet{croce2020reliable} proposes AutoAttack (AA) which consists of an ensemble of several attack methods, including both black-box and white-box attacks.
\citet{croce2021mind} extends AA to the case of $l_1$ bounded perturbations and proposes AutoAttack-$l_1$ (AA-$l_1$).
Although the $l_1$ bounded perturbations are usually sparse, \citet{jiang2023towards} demonstrates that AA-$l_1$ is able to find non-sparse perturbations that cannot be found by SLIDE to fool the models.
That is to say, $l_1$ bounded adversarial perturbations are not guaranteed to be sparse. We should study perturbations bounded by $l_0$ norm.


\textbf{Sparse Attacks:} For perturbations bounded by $l_0$ norm, directly adopting vanilla PGD as in Eq. (\ref{eq:pgd}) leads to suboptimal performance due to the non-convexity nature of the adversarial budget: $\mathrm{PGD_0}$ \citep{croce2019sparse}, which updates the perturbation by gradient ascent and project it back to the adversarial budget, turns out very likely to trap in the local maxima. Different from $\mathrm{PGD_0}$, CW L0 \citep{carlini2017towards} projects the perturbation onto the feasible set based on the absolute product of gradient and perturbation and adopts a mechanism similar to CW L2 \citep{carlini2017towards} to update the perturbation.
SparseFool \citep{2018SparseFool} and GreedyFool \citep{2020GreedyFool} also generate sparse perturbations, but they do not strictly restrict the $l_0$ norm of perturbations.
If we project their generated perturbations to the desired $l_0$ ball, their performance will drastically drop.
Sparse Adversarial and Interpretable Attack Framework (SAIF) \citep{imtiaz2022saif} is similar to our method in that SAIF also decomposes the $l_0$ perturbation into a magnitude tensor and sparsity mask, but it uses the Frank-Wolfe algorithm \citep{frank1956algorithm} to separately update them. SAIF turns out to get trapped in local minima and shows poor performance on adversarially trained models.
Besides white-box attacks, there are black-box attacks to generate sparse adversarial perturbations, including CornerSearch~\citep{croce2019sparse} and Sparse-RS~\citep{croce2022sparse}.
However, these black-box attacks usually require thousands of queries to find an adversarial example, making it difficult to scale up to large datasets.
In addition to the unstructured adversarial perturbations mentioned above, there are several works discussing structured sparse perturbations, including universal adversarial patch for all data \citep{brown2017adversarial, karmon2018lavan} and some image-specific patches~\citep{yang2020patch, rao2020adversarial, croce2022sparse}. However, these works only support generating a single adversarial patch, which lacks of flexibility and generality.

\textbf{Adversarial Training:} 
Despite the difficulty in obtaining robust deep neural network, adversarial training \citep{madry2017towards, Croce2019MinimallyDA, sehwag2021, rebuffi2021data, 2021Improving, rade2021helperbased, Cui2023DecoupledKD, Wang2023BetterDM} stands out as a reliable and popular approach to do so \citep{Athalye2018ObfuscatedGG, croce2020reliable}.
It generates adversarial examples first and then uses them to optimize model parameters.
Despite effective, adversarial training is time-consuming due to multi-step attacks.
\citet{Shafahi2019AdversarialTF, Zhang2019YouOP, Wong2020FastIB, Sriramanan2021TowardsEA} use weaker but faster one-step attacks to reduce the overhead, but they may suffer from catastrophic overfitting \citep{Kang2021UnderstandingCO}: the model overfits to these weak attacks during training instead of achieving true robustness to various attacks. \citet{Kim2020UnderstandingCO, Andriushchenko2020UnderstandingAI, Golgooni2021ZeroGradM, Jorge2022MakeSN} try to overcome catastrophic overfitting while maintaining efficiency.

Compared with $l_\infty$ and $l_2$ bounded perturbations, adversarial training against $l_1$ bounded perturbations is shown to be even more time-consuming to achieve the optimal performance \citep{croce2021mind}.
In the case of $l_0$ bounded perturbations, PGD$_0$ \cite{croce2019sparse} is adopted for adversarial training. However, models trained by PGD$_0$ exhibit poor robustness against strong sparse attacks.
In this work, we propose an effective and efficient sparse attack that enables us to train a model that is more robust against various sparse attacks than existing methods.


\section{Unstructured Sparse Adversarial Attack} \label{sec:unstruct}
In this section, we introduce Sparse-PGD (sPGD) for generating unstructured sparse perturbations. Its extension that generates structured sparse perturbations is introduced in Sec. \ref{sec:struct}.
Similar to AutoAttack~\cite{croce2020reliable, croce2021mind}, we further combine sPGD with a black-box attack to construct sparse-AutoAttack (sAA) for more comprehensive and reliable robustness evaluation.

\subsection{Sparse-PGD (sPGD)} \label{sec:l0_pgd}
Inspired by SAIF \citep{imtiaz2022saif}, we decompose the sparse perturbation $\bm{\delta}$ into a magnitude tensor $\vp\in\mathbb{R}^{h\times w\times c}$ and a sparsity mask $\vm\in\{0,1\}^{h\times w\times 1}$, i.e., $\bm{\delta} =\vp \odot \vm$. Therefore, the attacker aims to maximize the following loss objective function:
\begin{equation}
    \max_{\|\bm{\delta}\|_0 \leq \epsilon, 0 \leq \vx + \bm{\delta} \leq 1} \gL(\bm{\theta}, \vx + \bm{\delta}) = \max_{\vp \in \gS_\vp, \vm \in \gS_\vm}\mathcal{L}(\bm{\theta}, \vx+\vp \odot \vm). \label{eq:loss_ori}
\end{equation}
The feasible sets for $\vp$ and $\vm$ are $\gS_\vp = \{\vp \in \mathbb{R}^{h\times w\times c} \vert 0 \leq \vx + \vp \leq 1\}$ and $\gS_\vm = \{\vm \in \{0,1\}^{h\times w\times 1} \vert \|\vm\|_0 \leq \epsilon\}$, respectively.
Similar to PGD, sPGD iteratively updates $\vp$ and $\vm$ until finding a successful adversarial example or reaching the maximum iteration number. 

\textbf{Update Magnitude Tensor $\vp$:} The magnitude tensor $\vp$ is only constrained by the input domain. In the case of images, the input is bounded between $0$ and $1$. 
Note that the constraints on $\vp$ are elementwise and similar to those of $l_\infty$ bounded perturbations.
Therefore, instead of greedy or random search \citep{croce2019sparse,croce2022sparse}, we utilize PGD in the $l_\infty$ case, i.e., use the sign of the gradients, to optimize $\vp$ as demonstrated by Eq. (\ref{eq:delta}) below, with $\alpha$ being the step size.
\begin{equation}
    \vp \longleftarrow \Pi_{\gS_{\vp}}\left(\vp + \alpha \cdot \mathtt{sign}(\nabla_{\vp} \gL(\bm{\theta}, \vx + \vp \odot \vm))\right), \label{eq:delta}
\end{equation}

\textbf{Update Sparsity Mask $\vm$:} 
The sparsity mask $\vm$ is binary and constrained by its $l_0$ norm.
Directly optimizing the discrete variable $\vm$ is challenging, so we update its continuous alternative $\widetilde{\vm} \in \mathbb{R}^{h\times w\times 1}$ and project $\widetilde{\vm}$ to the feasible set $\gS_\vm$ to obtain $\vm$ before multiplying it with the magnitude tensor $\vp$ to obtain the sparse perturbation $\bm{\delta}$.
Specifically, $\widetilde{\vm}$ is updated by gradient ascent. Projecting $\widetilde{\vm}$ to the feasible set $\gS_{\vm}$ is to set the $\epsilon$-largest elements in $\widetilde{\vm}$ to $1$ and the rest to $0$.
In addition, we adopt the sigmoid function to normalize the elements of $\widetilde{\vm}$ before projection.

Mathematically, the update rules for $\widetilde{\vm}$ and $\vm$ are demonstrated as follows:
\begin{align}
    \widetilde{\vm} &\longleftarrow \widetilde{\vm} + \beta \cdot \nabla_{\widetilde{\vm}}\gL / \|\nabla_{\widetilde{\vm}} \gL\|_2,  \label{eq:m_3} \\
    \vm &\longleftarrow \Pi_{\gS_{\vm}}(\sigma(\widetilde{\vm})) \label{eq:m_project}
\end{align}
where $\beta$ is the step size for updating the sparsity mask's continuous alternative $\widetilde{\vm}$, $\sigma(\cdot)$ denotes the sigmoid function. 
Furthermore, to prevent the magnitude of $\widetilde{\vm}$ from becoming explosively large, we do not update $\widetilde{\vm}$ when $||\nabla_{\widetilde{\vm}}\gL||_2 < \gamma$, which indicates that $\widetilde{\vm}$ is located in the saturation zone of sigmoid function.
The gradient $\nabla_{\widetilde{\vm}} \gL$ is calculated at the point $\bm{\delta} = \vp \odot \Pi_{\gS_{\vm}}(\sigma(\widetilde{\vm}))$, where the loss function is not always differentiable. We demonstrate how to estimate the update direction in the next part.

\textbf{Backward Function:} Based on Eq.~(\ref{eq:loss_ori}), we can calculate the gradient of the magnitude tensor $\vp$ by $\nabla_\vp \gL = \nabla_{\bm{\delta}} \gL(\bm{\theta}, \vx + \bm{\delta}) \odot \vm$ and use $g_{\vp}$ to represent this gradient for notation simplicity. 
At most, $\epsilon$ non-zero elements are in the mask $\vm$, so $g_\vp$ is sparse and has at most $\epsilon$ non-zero elements. That is to say, we update at most $\epsilon$ elements of the magnitude tensor $\vp$ based on the gradient $g_\vp$. Like coordinate descent, this may result in suboptimal performance since most elements of $\vp$ are unchanged in each iterative update.
To tackle this problem, we discard the projection to the binary set $\gS_{\vm}$ when calculating the gradient and use the \textit{unprojected gradient} $\widetilde{g}_{\vp}$ to update $\vp$.
Based on Eq.~(\ref{eq:m_project}), we have $\widetilde{g}_{\vp} = \nabla_{\bm{\delta}} \gL(\bm{\theta}, \vx + \bm{\delta}) \odot \sigma(\widetilde{\vm})$.
The idea of the unprojected gradient is inspired by training pruned neural networks and lottery ticket hypothesis~\citep{frankle2018the, ramanujan2020s, fu2021drawing, liu2022robust}. All these methods train importance scores to prune the model parameters but update the importance scores based on the whole network instead of the pruned sub-network to prevent the sparse update, which leads to suboptimal performance.

In practice, the performance of using $g_{\vp}$ and $\widetilde{g}_{\vp}$ to optimize $\vp$ is complementary. The sparse gradient $g_{\vp}$ is consistent with the forward propagation and is thus better at exploitation. By contrast, the unprojected gradient $\widetilde{g}_{\vp}$ updates the $\vp$ by a dense tensor and is thus better at exploration.
In view of this, we set up an ensemble of attacks with both gradients to balance exploration and exploitation.

When calculating the gradient of the continuous alternative $\widetilde{\vm}$, we have $\frac{\partial \gL}{\partial \widetilde{\vm}} = \frac{\partial \gL(\theta, \vx + \bm{\delta})}{\partial \bm{\delta}} \odot \vp \odot \frac{\partial \Pi_{\gS_{\vm}} (\sigma(\widetilde{\vm}))}{\partial \widetilde{\vm}}$. Since the projection to the set $\gS_{\vm}$ is not always differentiable, we discard the projection operator and use the approximation $\frac{\partial \Pi_{\gS_{\vm}} (\sigma(\widetilde{\vm}))}{\partial \widetilde{\vm}} \simeq \sigma'(\widetilde{\vm})$ to calculate the gradient.

\begin{algorithm}[tb]
\caption{Sparse-PGD for $l_0$ Bounded Perturbations}\label{alg:main}
\begin{algorithmic}[1]
    \STATE {\bfseries Input:} Clean image: $\vx \in [0, 1]^{h\times w\times c}$; Model parameters: $\vtheta$; Max iteration number: $T$; Tolerance: $t$; $l_0$ budget: $\epsilon$; Step size: $\alpha$, $\beta$; Small constant: $\gamma=2\times10^{-8}$
    \STATE Random initialize $\vp$ and $\widetilde{\vm}$
     \STATE $\vm = \Pi_{\gS_{\vm}}(\sigma(\widetilde{\vm}))$
    \FOR {$i=0, 1, ..., T-1$} 
        \STATE Calculate the loss $\mathcal{L}(\vtheta, \vx + \vp\odot \vm)$
        \IF{unprojected}  
        \STATE $g_{\vp} = \nabla_{\bm{\delta}} \gL \odot \sigma(\widetilde{\vm})$ \quad\quad\quad\quad\quad\quad
        \COMMENT{$\bm{\delta} = \vp\odot \vm$}
        \ELSE
        \STATE $g_{\vp} = \nabla_{\bm{\delta}} \gL \odot \vm$ 
        \ENDIF
        \STATE $g_{\widetilde{\vm}} = \nabla_{\bm{\delta}} \gL \odot \vp \odot \sigma'(\widetilde{\vm})$
        \STATE $\vp = \Pi_{\gS_{\vp}}(\vp + \alpha \cdot \mathtt{sign}(g_{\vp}))$
         
        \STATE $\vd = g_{\widetilde{\vm}} / \|g_{\widetilde{\vm}}\|_2$ \textbf{if} $||g_{\widetilde{\vm}}||_2 \geq \gamma$ \textbf{else} $0$
        \STATE $\vm_{old},\ \widetilde{\vm} = \vm,\ \widetilde{\vm} + \beta \cdot \vd$
        \STATE $\vm = \Pi_{\gS_{\vm}}(\sigma(\widetilde{\vm}))$
        \IF{attack succeeds} 
        \STATE break
        \ENDIF
        \IF{$||\vm - \vm_{old}||_0 \leq 0$ for $t$ consecutive iters}
        \STATE Random initialize $\widetilde{\vm}$
        \ENDIF
    \ENDFOR
    \STATE {\bfseries Output:} Perturbation: $\bm{\delta} = \vp\odot \vm$
\end{algorithmic}
\end{algorithm}
\textbf{Random Reinitialization:} Due to the projection to the set $\gS_{\vm}$ in Eq. (\ref{eq:m_project}), the sparsity mask $\vm$ changes only when the relative magnitude ordering of the continuous alternative $\widetilde{\vm}$ changes. In other words, slight changes in $\widetilde{\vm}$ usually mean no change in $\vm$.
As a result, $\vm$ usually gets trapped in a local maximum.
To solve this problem, we propose a random reinitialization mechanism.
Specifically, when the attack fails, i.e., the model still gives the correct prediction, and the current sparsity mask $\vm$ remains unchanged for three consecutive iterations, the continuous alternative $\widetilde{\vm}$ will be randomly reinitialized for better exploration.

To summarize, we provide the pseudo-code of sparse PGD (sPGD) in Algorithm \ref{alg:main}. 
SAIF~\cite{imtiaz2022saif} also decomposes the perturbation $\bm{\delta}$ into a magnitude tensor $\vp$ and a mask $\vm$, but uses a different update rule: it uses Frank-Wolfe to update both $\vp$ and $\vm$. By contrast, we introduce the continuous alternative $\widetilde{\vm}$ of $\vm$ and use gradient ascent to update $\vp$ and  $\widetilde{\vm}$.
Moreover, we include unprojected gradient and random reinitialization techniques in Algorithm~\ref{alg:main} to further enhance the performance.

\subsection{Sparse-AutoAttack (sAA)} \label{sec:saa}

AutoAttack (AA) \citep{croce2020reliable} is an ensemble of four diverse attacks for a standardized parameters-free and reliable evaluation of robustness against $l_\infty$ and $l_2$ attacks. \citet{croce2021mind} extends AutoAttack to $l_1$ bounded perturbations. In this work, we propose sparse-AutoAttack (sAA), which is also a parameter-free ensemble of both black-box and white-box attacks for comprehensive robustness evaluation against $l_0$ bounded perturbations. It can be used in a plug-and-play manner.
However, different from the $l_\infty$, $l_2$ and $l_1$ cases, the adaptive step size, momentum and difference of logits ratio (DLR) loss function do not improve the performance in the $l_0$ case, so they are not adopted in sAA.
In addition, compared with targeted attacks, sPGD turns out stronger when using a larger query budget in the untargeted settings given the same total number of back-propagations.
As a result, we only include the untargeted sPGD with cross-entropy loss and constant step sizes in sAA.
Specifically, we run sPGD twice for two different backward functions: one denoted as sPGD$_{\mathrm{proj}}$ uses the sparse gradient $g_{\vp}$, and the other denoted as sPGD$_{\mathrm{unproj}}$ uses the unprojected gradient $\widetilde{g}_{\vp}$ as described in Section \ref{sec:l0_pgd}.
As for the black-box attack, we adopt the strong black-box attack Sparse-RS \citep{croce2022sparse}, which can generate $l_0$ bounded perturbations.
We run each version of sPGD and Sparse-RS for $10000$ iterations, respectively.
We use cascade evaluation to improve the efficiency. Concretely, suppose we find a successful adversarial perturbation by one attack for one instance. Then, we will consider the model non-robust in this instance and the same instance will not be further evaluated by other attacks.
Based on the efficiency and the attack success rate, the attacks in sAA are sorted in the order of \textbf{sPGD$_{\mathrm{unproj}}$}, \textbf{sPGD$_{\mathrm{proj}}$} and \textbf{Sparse-RS}.

\section{Structured Sparse Adversarial Attack} \label{sec:struct}
In this section, we extend Sparse-PGD (sPGD) and Sparse-AutoAttack (sAA) to generate structured sparse perturbations. 

\subsection{Formulation of Structured Sparsity}
Given an input $\vx\in\mathbb{R}^d$, we can partition its $d$ features into several groups that may overlap and define structured sparsity based on them. 
These groups can represent pixels of a row, a column, a patch, or a particular pattern.
Without the loss of generality, we use $\{1, 2, ..., d\}$ as the indices of $d$ features of the input and $N$ set of indices $\gG=\{G_j\}_{j = 1}^N$ to represent the groups.
For any perturbation, we define its \textit{group $l_0$ norm based on groups $\gG$} as \textit{the minimal number} of groups needed to cover its non-zero components.
Mathematically, \citet{bach2010structured} proposed a convex envelope of group $l_0$ norm as follows:
\begin{equation} \label{eq:gn_convex}
    \Omega(\vx)
    = \sum_{k=1}^d |x_{\pi_k}|\left[ F(\{\pi_1,...,\pi_k\}) - F(\{\pi_1,...,\pi_{k-1}\})\right].
\end{equation}
where $F$ is a submodular function defined on the subsets of $V=\{1,2,...,d\}$ and $\{\pi_i\}_{i = 1}^d$ is a permutation of $\{1,2,...,d\}$.
More specifically, $F(A)$ indicates the minimal number of groups from $\gG$ to cover the set $A$ and $\{\pi_i\}_{i = 1}^d$ indicates the components of $|\vx|$ in the decreasing order, i.e., $|x_{\pi_1}|\geq...\geq|x_{\pi_d}|\geq0$.
Compared with the convex envelope $\Omega$, the group $l_0$ norm $\Omega_0$ is defined as follows:
\begin{equation} \label{eq:gn}
\begin{aligned}
    \Omega_0(\vx) &= \sum_{k=1}^d \mathbbm{1}(|x_{\pi_k}|)\left[ F(\{\pi_1,...,\pi_k\}) - F(\{\pi_1,...,\pi_{k-1}\})\right] \\
    &= F(\{\pi_1, ..., \pi_{d'}\})\ \ \mathrm{where}\ x_{\pi_{d'}} \neq 0\ \mathrm{and}\ x_{\pi_{d' + 1}} = 0.
\end{aligned}
\end{equation}
Similar to the difference between $l_1$ norm and $l_0$ norm,  we replace $|x_{\pi_k}|$ with $\mathbbm{1}(|x_{\pi_k}|)$ in Eq. (\ref{eq:gn}) to indicate the number of groups with non-zero entities, where $\mathbbm{1}$ is an indicator function. 
The second equality in Eq. (\ref{eq:gn}) is based on the decreasing order of $\{|x_{\pi_i}|\}_{i = 1}^d$ and is a more straightforward definition of the group $l_0$ norm.
Like the $l_0$ norm, the adversarial budgets based on the group $l_0$ norm are not convex, either.

In general, it is difficult to decide the minimum number of groups to cover non-zero elements, i.e., to calculate the function $F$.
Therefore, it is challenging to directly apply sPGD to the group $l_0$ norm constraints. To address this issue, we can approximate $\Omega_0$ by a tight surrogate that facilitates the optimization by sPGD.
In this regard, we introduce a binary group mask $\vv=[v_1,...,v_{N}] \in \{0, 1\}^N$ to indicate whether a specific group is chosen to be perturbed. 
Mathematically, $\forall i$, if $\vx_i \neq 0$, then $\exists j \in \{1, 2, ..., N\}$, $v_j = 1$ and $i \in G_j$.
In this context, we propose the following approximation of the group $l_0$ norm as the surrogate:
\begin{equation} \label{eq:gn_approx}
     \Omega_0'(\vx,\vv)
    = \sum_{i=1}^N \mathbbm{1}(\|\vx_{G_i}\|_p)\cdot v_i,
\end{equation}
where $\vx_{G_i}$ is the subvector of $\vx$ on the indices in $G_i$, and $p \in [0, +\infty]$. 
Compared with $\Omega_0$ in Eq. (\ref{eq:gn}), $\Omega'_0$ in Eq. (\ref{eq:gn_approx}) does not require the number of groups in $\gG$ to cover non-zero elements in $\vx$ to be minimum.
Instead, $\Omega'_0$ calculates the number of perturbed groups in $\gG$ selected by the binary vector $\vv$.
In addition, when $\mathbbm{1}(\|\vx_{G_i}\|_p)=1$ for all $i\in\{j~|~v_j=1\}$, the approximated group $l_0$ norm $\Omega_0'(\vx,\vv)$ is equivalent to the $l_0$ norm of the group mask $\vv$, i.e., $\Omega_0'(\vx,\vv) = \|\vv\|_0$. In summary, we have the following theorem:
\begin{theorem} \label{theorem:gn}
Given an input $\vx\in\mathbb{R}^d$, a set of groups $\mathcal{G}=\{G_j\}_{j = 1}^N$ and any vector $\vv$ satisfying the constraint in the definition of $\Omega'_0$, then we have
$\Omega_0(\vx) \leq \Omega_0'(\vx,\vv) \leq \|\vv\|_0$. 
\end{theorem}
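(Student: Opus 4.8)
The plan is to prove the two inequalities separately, each by unwinding the definitions; both are short.

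For the right-hand inequality $\Omega_0'(\vx,\vv)\le\|\vv\|_0$ I would argue directly from Eq.~(\ref{eq:gn_approx}): every factor $\mathbbm{1}(\|\vx_{G_i}\|_p)$ lies in $\{0,1\}$ and every $v_i$ lies in $\{0,1\}$, so the $i$-th summand $\mathbbm{1}(\|\vx_{G_i}\|_p)\cdot v_i$ is at most $v_i$. Summing over $i$ gives $\Omega_0'(\vx,\vv)\le\sum_{i=1}^N v_i=\|\vv\|_0$. This step is immediate and needs no further comment.

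For the left-hand inequality $\Omega_0(\vx)\le\Omega_0'(\vx,\vv)$ I would first record, using the second line of Eq.~(\ref{eq:gn}), that $\Omega_0(\vx)=F(\mathrm{supp}(\vx))$, where $\mathrm{supp}(\vx)=\{i:x_i\ne0\}$; by the definition of $F$ this is the minimum number of groups of $\gG$ whose union contains $\mathrm{supp}(\vx)$. Next set $S=\{\,i\in\{1,\dots,N\}: v_i=1 \text{ and } \|\vx_{G_i}\|_p\ne0\,\}$, so that $\Omega_0'(\vx,\vv)=|S|$ by Eq.~(\ref{eq:gn_approx}). The key claim is that $\{G_i : i\in S\}$ already covers $\mathrm{supp}(\vx)$: given any $j$ with $x_j\ne0$, the feasibility constraint imposed on $\vv$ in the definition of $\Omega_0'$ yields an index $i$ with $v_i=1$ and $j\in G_i$; since $j\in G_i$ and $x_j\ne0$, the subvector $\vx_{G_i}$ is not the zero vector, hence $\|\vx_{G_i}\|_p\ne0$, so $i\in S$ and $j\in\bigcup_{i\in S}G_i$. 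Because $F(\mathrm{supp}(\vx))$ is the minimum size of a covering subfamily and $\{G_i:i\in S\}$ is one such subfamily, we conclude $\Omega_0(\vx)=F(\mathrm{supp}(\vx))\le|S|=\Omega_0'(\vx,\vv)$, which finishes the chain.

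I do not anticipate a real obstacle; the only points deserving an explicit line are (i) that $\|\vx_{G_i}\|_p=0$ if and only if $\vx_{G_i}=\vzero$ uniformly over $p\in[0,+\infty]$ (for $p=0$ this is the number of nonzero entries, for $p\in(0,\infty)$ the usual $\ell_p$ quasi-norm, and for $p=\infty$ the max of absolute values), which is what lets the feasibility constraint force $i\in S$, and (ii) making explicit that $\Omega_0$ is a covering number, so that exhibiting a single covering subfamily of size $|S|$ bounds it from above. Everything else is bookkeeping with the indicator sums.
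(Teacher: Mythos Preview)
Your proof is correct and follows essentially the same approach as the paper's: both treat the right-hand inequality as immediate from $\mathbbm{1}(\cdot)\le 1$ and $v_i\in\{0,1\}$, and both prove the left-hand inequality by observing that the index set $S=\{i:v_i=1,\ \vx_{G_i}\ne\vzero\}$ furnishes a covering of $\mathrm{supp}(\vx)$ whose cardinality therefore dominates the minimal covering number $\Omega_0(\vx)$. Your write-up is simply more explicit than the paper's two-sentence sketch, in particular spelling out why the feasibility constraint forces $i\in S$ and noting the uniform equivalence $\|\vx_{G_i}\|_p=0\Leftrightarrow \vx_{G_i}=\vzero$ across $p\in[0,\infty]$.
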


\begin{proof}
The first inequality is based on the minimum optimality condition of the function $F$ in Eq. (\ref{eq:gn}). Since $\Omega_0(\vx)$ indicates the minimal number of groups needed to cover the indices of non-zero elements in $\vx$, we have at least the same number of terms in Eq. (\ref{eq:gn_approx}) where $\vx_{G_i} \neq 0$ and $v_i = 1$. Therefore, we have $\Omega_0(\vx) \leq \Omega'_0(\vx)$.
For the second inequality, $\vv$ is a binary vector, i.e., $\forall i$, $\vv_i \in \{0, 1\}$, so $\Omega_0'(\vx,\vv) \leq \|\vv\|_0$ is clear.
\end{proof}

Theorem \ref{theorem:gn} indicates that the number of perturbed groups determined by $\vv$ can be larger than the minimal number of groups to cover the perturbed features $\vx$, i.e., $\Omega_0'(\vx,\vv)$ can be larger than $\Omega_0(\vx)$.
The gap between $\Omega_0'(\vx,\vv)$ and $\Omega_0(\vx)$ stem from the potential overlap among groups in $\gG$.
For instance, if we generate $1\times 3$ perturbations, for a $\vx$ with the shape of $1\times 5$ and all of its entities are perturbed, the corresponding $\Omega_0(\vx)=2$, while the possible $\|\vv\|_0$ can be $2$ or $3$, i.e., $\vv=[1,0,1]$ or $[1,1,1]$. Thus, we have $\Omega_0(\vx) \leq \Omega_0'(\vx,\vv) \leq \|\vv\|_0$.
However, we demonstrate that $\Omega_0'(\vdelta,\vv)$ is quite close to $\Omega_0(\vdelta)$ in practice. To corroborate this, we calculate the ratio between the group $l_0$ norm and approximated group $l_0$ norm, i.e., $\Omega_0(\vdelta)/\Omega_0'(\vdelta,\vv)$ of perturbations under different sparsity levels $\epsilon$ defined by $\Omega_0'(\vdelta,\vv) \leq \epsilon$. As illustrated in Figure \ref{fig:ratio}, when applying patch perturbations, we can observe that the ratio $\Omega_0(\vdelta)/\Omega_0'(\vdelta,\vv)$ increases as the ratio between the maximum number of perturbed pixels and the total number of pixels decreases, because the probability of overlapping accordingly declines.
Specifically, when $\epsilon$ is smaller than $5$, the ratios of perturbations on both CIFAR-10 and ImageNet-100 are larger than $0.95$. Notably, on ImageNet-100, the $l_0$ norm of $5$ non-overlapping $10\times10$ patches reaches $500$, which is very large for a sparse perturbation used in practice, so we can conclude based on the simulation results that $\Omega_0'$ is a good approximation of $\Omega_0$ across datasets with different resolutions.
\begin{figure}[t]
    \centering
    \includegraphics[width=0.85\linewidth]{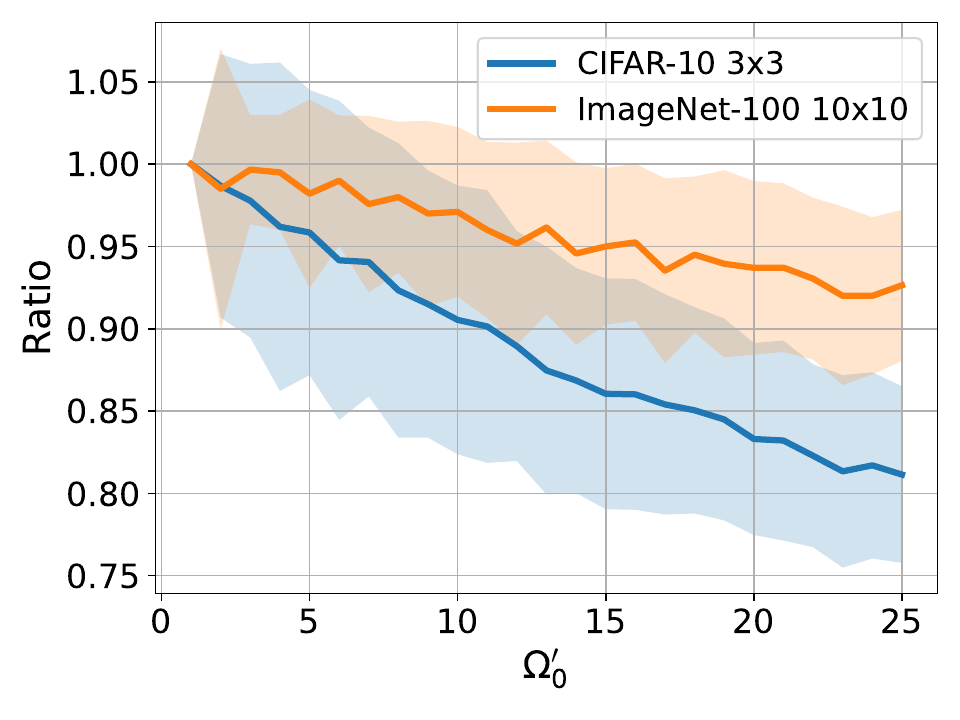}
    \vspace{-1em}
    \caption{The ratio between the group $l_0$ norm and the approximated group $l_0$ norm. The x-axis is the approximated group $l_0$ norm ranging from 1 to 25, and the y-axis is the ratio. We plot the ratios of $3\times 3$ patch perturbations on CIFAR-10 ($32 \times 32$) and $10 \times 10$ patch perturbations on ImageNet-100 ($224 \times 224$), respectively. The results are calculated on 100 samples. The solid line and shadow denote the mean value and standard deviation, respectively.}
    \vspace{-1em}
    \label{fig:ratio}
\end{figure}

\subsection{sPGD for Structured Sparse Perturbations}
\begin{figure*}
    \centering
    \includegraphics[width=0.75\linewidth]{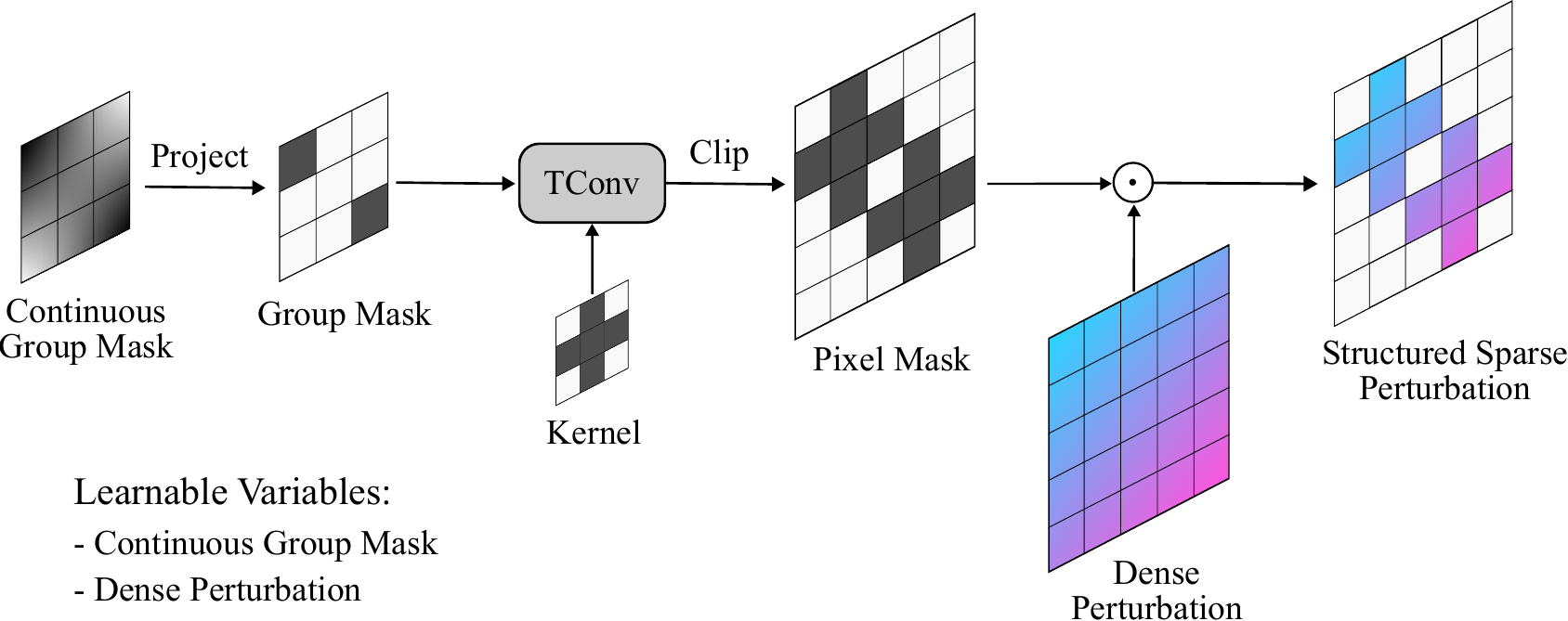}
    \caption{Pipeline of sPGD for structured sparse perturbations. The continuous group mask $\widetilde{\vv}$ is first projected to get the binary group mask $\vv$ to ensure $\|\vv\|_0 \leq \epsilon$, which is similar to Eq. (\ref{eq:m_project}). Given the kernel $\vk \in \{0, 1\}^{r \times r}$ with a customized pattern, we can transform $\vv$ to the pixel mask $\vm$ using transposed convolution and clipping (see Eq. (\ref{eq:map})). Finally, we element-wisely multiply $\vm$ with the dense perturbation $\vp$ to obtain the structured sparse perturbation $\vdelta$. Note that the continuous group mask $\widetilde{\vv}$ and the dense perturbation $\vp$ are learnable.}
    \label{fig:pipeline}
\end{figure*}
Based on the approximated group $l_0$ norm $\Omega_0'$ proposed in Eq. (\ref{eq:gn_approx}), we extend Sparse-PGD (sPGD) to generate structured sparse perturbations. Similarly, we decompose the perturbation $\vdelta$ into a magnitude tensor $\vp$ and a binary pixel mask $\vm$. Like unstructured cases, the magnitude tensor $\vp$ is updated using $l_\infty$ PGD, as in Eq. (\ref{eq:delta}). 

Based on the definition of structured sparsity and $\Omega'_0$ defined in Eq. (\ref{eq:gn_approx}), if a specific group $G_i$ is chosen to be perturbed, then we let $v_i = 1$, otherwise $v_i$ will be set $0$. 
In this regard, finding the optimal $\vm$ s.t. $\Omega'_0(\vm,\vv)\leq \epsilon$ is equivalent to finding the optimal group mask $\vv$ s.t. $\|\vv\|_0\leq \epsilon$, which can be resolved by sPGD. Similar to the unstructured cases, we optimize the continuous alternative $\widetilde{\vv}$ rather than optimizing a binary $\vv$ directly.
In the following, we will elaborate on the mapping from $\vv$ to $\vm$ using different examples.  

\textbf{Column / Row:} If we aim to perturb at most $\epsilon$ columns in an image $\vx\in[0,1]^{h\times w\times c}$. Since we can partition $\vx$ into $w$ groups (i.e., columns), $\vv$ has the size of $1\times w$. Thus,  $\vm$ can obtained through expanding $\vv$ from $1\times w$ to $h\times w$. As for row, the only difference is that the size of $\vv$ is $h\times 1$. Note that, there is no overlap between any two columns or rows, so $\Omega'_0(\vm,\vv)\equiv\Omega_0(\vm)$ in this case.

\textbf{Patch / Any Pattern:}
Without loss of generality, we assume that the perturbations are contained to be localized in $r\times r$ patches. To avoid wasting budget due to the potential overlap at the border or corner of images, we let $\vv \in \{0,1\}^{(h-r+1)\times (w-r+1)}$, and we can derive the mapping operation from $\vv$ to $\vm$ as follows:
\begin{equation} \label{eq:map}
    \vm = \min\left(\mathrm{TConv}(\vv, \vk, s), 1\right),
\end{equation}
where $\mathrm{TConv}$ is the transposed convolution operation, $\vk=\mathbf{1}^{r\times r}$ is the kernel with all-one entities, $s=1$ denotes the stride. The clipping operation is to ensure the output value assigned to $\vm$ is binary.
Moreover, we can customize the kernel $\vk\in\{0,1\}^{r\times r}$ in Eq.~(\ref{eq:map}) to make the perturbations localized in groups with any desired patterns, such as hearts, stars, letters and so on.
Notably, when $\vk=[1]$, i.e., a $1 \times 1$ matrix with a sole value $1$, the resulting perturbations are degraded to unstructured sparse ones; when $\vk=\mathbf{1}^{h\times1}$ or $\mathbf{1}^{1\times w}$, specific columns or rows are perturbed. The pipeline of sPGD for structured sparse perturbations in the general case is illustrated in Figure \ref{fig:pipeline}.

In addition to the forward mapping operation defined in Eq. (\ref{eq:map}), we provide its backward function as follows:
\begin{equation}
    \vg_{\vv} = \mathrm{Conv}(\vg_{\vm}, \vk, s),
\end{equation}
where $\mathrm{Conv}$ is the convolution operation, and $\vg_{\vm}$ is the gradient of $\vm$. We neglect the clipping operation when calculating the gradient of $\vv$ since it is not always differentiable.

The extension of sPGD for structured sparse perturbations offers a unified framework to generate sparse adversarial perturbations. We provide its pseudo-code in Algorithm \ref{alg:structure}.

\begin{algorithm}[tb]
\caption{Sparse-PGD for Structured Sparse Perturbations}\label{alg:structure}
\begin{algorithmic}[1]
    \STATE {\bfseries Input:} Clean image: $\vx \in [0, 1]^{h\times w\times c}$; Model parameters: $\vtheta$; Customized pattern kernel: $\vk \in \{0, 1\}^{r \times r}$; Stride: $s = 1$; Max iteration number: $T$; Tolerance: $t$; group $l_0$ budget: $\epsilon$; Step size: $\alpha$, $\beta$; Small constant: $\gamma=2\times10^{-8}$
    \STATE Random initialize $\vp$ and $\widetilde{\vv}$
    \STATE $\vv = \Pi_{\gS_{\vv}}(\sigma(\widetilde{\vv}))$ where $\gS_{\vv} = \{\vv \vert \|\vv\|_0 \leq \epsilon\}$.
    \STATE $\vm = \min\left(\mathrm{TConv}(\vv, \vk, s), 1\right)$
    \FOR {$i=0, 1, ..., T-1$} 
        \STATE Calculate the loss $\mathcal{L}(\vtheta, \vx + \vp\odot \vm)$
        \IF{unprojected}  
        \STATE $g_{\vp} = \nabla_{\vdelta} \gL \odot \min\left(\mathrm{TConv}(\sigma(\widetilde{\vv}), \vk, s), 1\right)$
        
        \ELSE
        \STATE $g_{\vp} = \nabla_{\bm{\delta}} \gL \odot \vm$ 
        \ENDIF
        \STATE $g_{\widetilde{\vv}} = \mathrm{Conv}(\nabla_{\vm}\gL, \vk, s) \odot \sigma'(\widetilde{\vv})$
        \STATE $\vp = \Pi_{\gS_{\vp}}(\vp + \alpha \cdot \mathtt{sign}(g_{\vp}))$
         
        \STATE $\vd = g_{\widetilde{\vv}} / \|g_{\widetilde{\vv}}\|_2$ \textbf{if} $||g_{\widetilde{\vv}}||_2 \geq \gamma$ \textbf{else} $0$
        \STATE $\vv_{old},\ \widetilde{\vv} = \vv,\ \widetilde{\vv} + \beta \cdot \vd$ 
        \STATE $\vv = \Pi_{\gS_{\vv}}(\sigma(\widetilde{\vv}))$
        \STATE $\vm = \min\left(\mathrm{TConv}(\vv, \vk, s), 1\right)$
        \IF{attack succeeds} 
        \STATE break
        \ENDIF
        \IF{$||\vv - \vv_{old}||_0 \leq 0$ for $t$ consecutive iters}
        \STATE Random initialize $\widetilde{\vv}$
        \ENDIF
    \ENDFOR
    \STATE {\bfseries Output:} Perturbation: $\vdelta = \vp\odot \vm$
\end{algorithmic}
\end{algorithm}

\subsection{sAA for Structured Sparse Perturbations}
Similar to Section \ref{sec:saa}, we also combine sPGD$_{\mathrm{unproj}}$, sPGD$_{\mathrm{proj}}$ and Sparse-RS to build Sparse-AutoAttack (sAA) for reliable evaluation of robustness against structured sparse perturbations. 
We extended the original version of Sparse-RS so that it can generate structured sparse perturbations with any customized patterns.
Specifically, we just apply a pattern mask to the patches generated by Sparse-RS.


\section{Adversarial Training}

In addition to adversarial attacks, we explore adversarial training to build robust models against sparse perturbations.
In the framework of adversarial training, the attack is used to generate adversarial perturbation in each training iteration, so the attack algorithm should not be too computationally expensive.
In this regard, we run the untargeted sPGD (Algorithm \ref{alg:main}) for $20$ iterations to generate sparse adversarial perturbations during training.
We incorporate sPGD in the framework of vanilla adversarial training~\citep{madry2017towards} and TRADES~\citep{Zhang2019TheoreticallyPT} and name corresponding methods \textbf{sAT} and \textbf{sTRADES}, respectively. Note that we use sAT and sTRADES as two examples of applying sPGD to adversarial training since sPGD can be incorporated into any other adversarial training variant as well. To accommodate the scenario of adversarial training, we make the following modifications to sPGD.

\textbf{Random Backward Function:} Since the sparse gradient and the unprojected gradient as described in Section~\ref{sec:l0_pgd} induce different exploration-exploitation trade-offs, we randomly select one of them to generate adversarial perturbations for each mini-batch when using sPGD to generate adversarial perturbations. Compared with mixing these two backward functions together, as in sAA, random backward function does not introduce computational overhead.

\textbf{Multi-$\epsilon$ Strategy:} Inspired by $l_1$-APGD \citep{croce2021mind} and Fast-EG-$l_1$ \citep{jiang2023towards}, multi-$\epsilon$ strategy is adopted to strengthen the robustness of model. 
That is, we use a larger sparsity threshold, i.e., $\epsilon$ in Algorithm~\ref{alg:main}, in the training phase than in the test phase.

\textbf{Higher Tolerance for Reinitialization:} The default tolerance for reinitialization in sPGD is $3$ iterations, which introduces strong stochasticity.
However, in the realm of adversarial training, we have a limited number of iterations. As a result, the attacker should focus more on the exploitation ability to ensure the strength of the generated adversarial perturbations.
While stochasticity introduced by frequent reinitialization hurts exploitation, we find a higher tolerance for reinitialization improves the performance. In practice, we set the tolerance to $10$ iterations in adversarial training.

\section{Experiments}
In this section, we conduct extensive experiments to compare our attack methods with baselines in evaluating the robustness of various models against $l_0$ bounded and structured sparse perturbations.
Besides the effectiveness with an abundant query budget, we also study the efficiency of our methods.
Our results demonstrate that our sPGD performs best among white-box attacks. With limited iterations, sPGD achieves significantly better performance than existing methods. Therefore, sAA, consisting of the best white-box and black-box attacks, has the best attack success rate. sPGD, due to its efficiency, is utilized for adversarial training to obtain the best robust models against $l_0$ bounded and structured sparse adversarial perturbations.
To further demonstrate the efficacy of our method, we evaluate the transferability of sPGD. The results show that sPGD has a high transfer success rate, making it applicable in practical scenarios.
Moreover, to exhibit the versatility of our method, we evaluate sPGD in objection detection and segmentation tasks.
The adversarial examples generated by our methods are presented in Sec. \ref{sec:vis}. 
In addition, we conduct ablation studies for analysis in Appendix C. 
Implementation details are deferred to Appendix A.

\begin{table*}[!t]
\setlength\tabcolsep{5pt}
\vspace{-1em}
\small
    \centering
    \caption{\small Robust accuracy of various models on different sparse attacks. The models are trained on \textbf{CIFAR-10}. Note that we report results of Sparse-RS (RS) with fine-tuned hyperparameters, which outperforms its original version in \citet{croce2022sparse}. CornerSearch (CS) is evaluated on 1000 samples due to its high computational complexity.}
\begin{minipage}{0.48\textwidth}
    \subtable[\small $\epsilon=$ 15]{
    \small
    \resizebox{\textwidth}{!}{
    \small
 \begin{tabular}{l c|c|c c |c c c c c |c}
        \toprule[1.5pt]
        \multirow{2}{*}{Model} & \multirow{2}{*}{Network} & \multirow{2}{*}{Clean} & \multicolumn{2}{c|}{Black} & \multicolumn{5}{c|}{White} & \multirow{2}{*}{\textbf{sAA}} \\
          & & & CS & RS  & SF & PGD$_0$ & SAIF & \textbf{sPGD$_{\mathrm{p}}$}& \textbf{sPGD$_{\mathrm{u}}$}& \\
         \midrule[1pt]
         Vanilla & RN-18 & 93.9  & 1.6 &  0.0  & 25.3& 2.1& 12.0 &  0.0 & 0.0 & \textbf{0.0}\\
         
         \midrule[1pt]
         \multicolumn{10}{l}{$l_\infty$-adv. trained, $\epsilon=8/255$}\\
         \midrule[0.5pt]
         GD & PRN-18 & 87.4 &30.5 & 12.2 & 61.1 & 36.0 & 51.3 & 17.1 & 24.3 & \textbf{11.3}\\
         PORT & RN-18 & 84.6 & 30.8& 15.2 &62.1 & 31.4 & 52.1 &  17.4 & 23.0 & \textbf{13.0}\\
         DKL& WRN-28 & 92.2 & 35.3& 13.2 & 62.5 & 41.2& 52.3& 18.4 & 24.9 & \textbf{12.1}\\
         DM& WRN-28 & 92.4 &34.8& 12.6 & 57.9& 38.5& 49.4& 17.9 & 24.0 & \textbf{11.6}\\

         \midrule[1pt]
         \multicolumn{10}{l}{$l_2$-adv. trained, $\epsilon=0.5$}\\
         \midrule[0.5pt]
         HAT & PRN-18 & 90.6 & 38.9 &23.5  & 65.3  & 35.4 & 60.2 & 19.0 & 18.6 & \textbf{16.6} \\
         PORT & RN-18 & 89.8 & 36.8 &20.6 & 64.3  & 30.6& 59.7 & 16.0 & 15.5 & \textbf{13.8}\\
         DM& WRN-28 & 95.2 &48.5 & 27.7 & 68.2 & 47.5 & 70.9& 25.0 & 26.7 & \textbf{22.1}\\
         FDA & WRN-28 & 91.8 & 47.8& 31.1 & 71.8&40.1 & 68.2& 28.0 & 31.4 & \textbf{25.5}\\
         
         \midrule[1pt]
         \multicolumn{10}{l}{$l_1$-adv. trained, $\epsilon=12$}\\
         \midrule[0.5pt]
         $l_1$-APGD & PRN-18 & 80.7 & 41.3 & 36.5 & 70.3  & 50.5& 62.3& 30.4 & 31.3 & \textbf{29.0}\\
         Fast-EG-$l_1$ & PRN-18 & 76.2 & 40.7 & 34.8 & 64.9 & 46.7 & 56.9& 29.6 &30.1 & \textbf{28.0}\\

         \midrule[1pt]
         \multicolumn{10}{l}{$l_0$-adv. trained, $\epsilon=15$}\\
         \midrule[0.5pt]
         PGD$_0$-A & PRN-18 & 83.7 & 17.5 & 6.1 & 73.7& 62.9 & 60.5 & 19.4 & 27.5& \textbf{5.6} \\
         PGD$_0$-T & PRN-18 & 90.5 & 19.5 & 7.2 & 85.5 & 63.6 & 69.8& 31.4 & 41.2& \textbf{7.1} \\
         \textbf{sAT} & PRN-18  & 80.9 & 46.0 & 37.6 & 77.1 & 74.1 & 72.3& 71.2 & 70.3& \textbf{37.6}  \\
         \textbf{sTRADES} & PRN-18 & 90.3 &71.7  & 63.7 & 89.5 &  88.1 & 86.5& 85.9 & 83.8& \textbf{63.7} \\
         \bottomrule[1.5pt]
    \end{tabular}
    }}
\end{minipage}
\begin{minipage}{0.48\textwidth}
    \subtable[\small $\epsilon=$ 20]{
    \small
    \resizebox{\textwidth}{!}{
    \small
 \begin{tabular}{l c|c|c c |c c c c c| c}
        \toprule[1.5pt]
          \multirow{2}{*}{Model} & \multirow{2}{*}{Network} & \multirow{2}{*}{Clean} & \multicolumn{2}{c|}{Black} & \multicolumn{5}{c|}{White} & \multirow{2}{*}{\textbf{sAA}} \\
          & & & CS & RS  & SF & PGD$_0$ & SAIF & \textbf{sPGD$_{\mathrm{p}}$}& \textbf{sPGD$_{\mathrm{u}}$}& \\
         \midrule[1pt]
         Vanilla & RN-18 & 93.9 & 1.2 & 0.0 & 17.5 & 0.4&3.2 & 0.0 & 0.0&\textbf{0.0}\\
         
         \midrule[1pt]
         \multicolumn{11}{l}{$l_\infty$-adv. trained, $\epsilon=$ 8/255}\\
         \midrule[0.5pt]
         GD & PRN-18 & 87.4 & 26.7&  6.1 &52.6& 25.2 & 40.4& 9.0 & 15.6 &\textbf{5.3}\\
         PORT & RN-18 & 84.6 &  27.8 & 8.5 &54.5 & 21.4 & 42.7 & 9.1 & 14.6 &\textbf{6.7}\\
         DKL& WRN-28 & 92.2 & 33.1& 7.0 &54.0& 29.3& 41.1& 9.9 & 15.8 &\textbf{6.1}\\
         DM& WRN-28 & 92.4 & 32.6& 6.7 &49.4& 26.9& 38.5& 9.9 & 15.1 &\textbf{5.9}\\

         \midrule[1pt]
         \multicolumn{11}{l}{$l_2$-adv. trained, $\epsilon=$ 0.5}\\
         \midrule[0.5pt]
         HAT & PRN-18 & 90.6  & 34.5 & 12.7 & 56.3 & 22.5 & 49.5 &9.1 & 8.5  &\textbf{7.2}\\
         PORT & RN-18 & 89.8 &  30.4& 10.5 &55.0& 17.2& 48.0 & 6.3 & 5.8 &\textbf{4.9}\\
         DM& WRN-28 & 95.2 & 43.3& 14.9 &59.2& 31.8 &59.6 & 13.5 &  12.0 &\textbf{10.2}\\
         FDA & WRN-28 & 91.8 & 43.8& 18.8 & 64.2& 25.5 &57.3 & 15.8 & 19.2 &\textbf{14.1}\\
         
         \midrule[1pt]
         \multicolumn{11}{l}{$l_1$-adv. trained, $\epsilon=$ 12}\\
         \midrule[0.5pt]
         $l_1$-APGD & PRN-18 & 80.7 & 32.3 & 25.0 & 65.4 & 39.8 & 55.6& 17.9 & {18.8} & \textbf{16.9}\\
         Fast-EG-$l_1$ & PRN-18 & 76.2  & 35.0&  24.6 & 60.8 & 37.1& 50.0&  18.1 & {18.6} & \textbf{16.8} \\

         \midrule[1pt]
         \multicolumn{11}{l}{$l_0$-adv. trained, $\epsilon=$ 20}\\
         \midrule[0.5pt]
         PGD$_0$-A & PRN-18 & 77.5 & 16.5 & 2.9 & 62.8& 56.0 & 47.9& 9.9 &21.6 &\textbf{2.4}\\
         PGD$_0$-T & PRN-18 &  90.0 & 24.1 & 4.9 &85.1 & 61.1 & 67.9& 27.3 & 37.9 &\textbf{4.5}\\
        
         \textbf{sAT} & PRN-18 & 84.5 & 52.1 & 36.2&81.2 &  78.0& 76.6 &  75.9  & 75.3  &\textbf{36.2}\\
         
         \textbf{sTRADES} & PRN-18 & 89.8 & 69.9 & 61.8&88.3 & 86.1 & 84.9 & 84.6 & 81.7 & \textbf{61.7}\\
         
         \bottomrule[1.5pt]
    \end{tabular}
    }}
\end{minipage}
    \label{tab:20}
    \vspace*{-1em}
\end{table*}

\subsection{Robustness against Unstructured Sparse Perturbations} \label{sec:evaluation}
First, we compare our proposed sPGD, including sPGD$_{\mathrm{proj}}$ (sPGD$_{\mathrm{p}}$) and sPGD$_{\mathrm{unproj}}$ (sPGD$_{\mathrm{u}}$) as defined in Section~\ref{sec:saa}, and sAA with existing white-box and black-box attacks that generate $l_0$ bounded sparse perturbations.
We evaluate different attack methods based on the models trained on CIFAR-10 \citep{krizhevsky2009learning} and report the robust accuracy with $\epsilon=15,~20$ on the whole test set in Table \ref{tab:20}. Additionally, the results on ImageNet-100 \citep{imagenet_cvpr09} and a real-world traffic sign dataset GTSRB \citep{stallkamp2012man} are reported in Table \ref{tab:more}. Note that the image sizes in GTSRB vary from $15\times 15$ to $250\times 250$. For convenience, we resize them to $224\times 224$ and use the same model architecture as in ImageNet-100. Furthermore, only the training set of GTSRB has annotations, we manually split the original training set into a test set containing $1000$ instances and a new training set containing the rest data.
In Appendix B, we report more results on CIFAR-10 and CIFAR-100 \citep{krizhevsky2009learning} in Table VIII, to demonstrate the efficacy of our methods. 

\textbf{Models:} We select various models to comprehensively evaluate their robustness against $l_0$ bounded perturbations. As a baseline, we train a ResNet-18 (RN-18) \citep{he2016deep} model on clean inputs. For adversarially trained models, we select competitive models that are publicly available, including those trained against $l_\infty$, $l_2$ and $l_1$ bounded perturbations. For the $l_\infty$ case, we include adversarial training with the generated data (GD) \citep{2021Improving}, the proxy distributions (PORT) \citep{sehwag2021}, the decoupled KL divergence loss (DKL) \citep{Cui2023DecoupledKD} and strong diffusion models (DM) \citep{Wang2023BetterDM}. For the $l_2$ case, we include adversarial training with the proxy distributions (PORT) \citep{sehwag2021}, strong diffusion models (DM) \citep{Wang2023BetterDM}, helper examples (HAT) \citep{rade2021helperbased} and strong data augmentations (FDA) \citep{rebuffi2021data}. The $l_1$ case is less explored in the literature, so we only include $l_1$-APGD adversarial training \citep{croce2021mind} and the efficient Fast-EG-$l_1$ \citep{jiang2023towards} for comparison.
The network architecture used in these baselines is either ResNet-18 (RN-18), PreActResNet-18 (PRN-18) \citep{he2016deep} or WideResNet-28-10 (WRN-28) \citep{zagoruyko2016wide}. 
For the $l_0$ case, we evaluate PGD$_0$ \citep{croce2019sparse} in vanilla adversarial training (PGD$_0$-A) and TRADES (PGD$_0$-T) using the same hyper-parameter settings as in \citet{croce2019sparse}. Since other white-box sparse attacks present trivial performance in adversarial training, we do not include their results. Finally, we use our proposed sPGD in vanilla adversarial training (sAT) and TRADES (sTRADES) to obtain PRN-18 models to compare with these baselines. 

\textbf{Attacks:} We compare our methods with various existing black-box and white-box attacks that generate $l_0$ bounded perturbations. The black-box attacks include CornerSearch (CS) \citep{croce2019sparse} and Sparse-RS (RS) \citep{croce2022sparse}. The white-box attacks include SparseFool (SF) \citep{2018SparseFool}, PGD$_0$ \citep{croce2019sparse} and Sparse Adversarial and Interpretable Attack Framework (SAIF) \citep{imtiaz2022saif}. 
The implementation details of each attack are deferred to Appendix A.
Specifically, to exploit the strength of these attacks in reasonable running time, we run all these attacks for either $10000$ iterations or the number of iterations where their performances converge.
Note that, the number of iterations for all these attacks are no smaller than their default settings.
In addition, we report the results of RS with fine-tuned hyperparameters, which outperforms its default settings in \citep{croce2022sparse}. 
Finally, we report the robust accuracy under CS attack based on only $1000$ random test instances due to its prohibitively high computational complexity. 

Based on the results in Table~\ref{tab:20} and Table~\ref{tab:more}, we can find that SF attack, PGD$_0$ attack and SAIF attack perform significantly worse than our methods for all the models studied.
That is, our proposed sPGD always performs the best among white-box attacks.
Among black-box attacks, CS attack can achieve competitive performance, but it runs dozens of times longer than our method does. Therefore, we focus on comparing our method with RS attack.
For $l_1$ and $l_2$ models, our proposed sPGD significantly outperforms RS attack. By contrast, RS attack outperforms sPGD for $l_\infty$ and $l_0$ models. 
This gradient masking phenomenon is, in fact, prevalent across sparse attacks. Given sufficient iterations, RS outperforms all other existing white-box attacks for $l_\infty$ and $l_0$ models. Nevertheless, among white-box attacks, sPGD exhibits the least susceptibility to gradient masking and has the best performance.
The occurrence of gradient masking in the context of $l_0$ bounded perturbations can be attributed to the non-convex nature of adversarial budgets. In practice, the perturbation updates often significantly deviate from the direction of the gradients because of the projection to the non-convex set.
Similar to AA in the $l_1$, $l_2$ and $l_\infty$ cases, sAA consists of both white-box and black-box attacks for comprehensive robustness evaluation. It achieves the best performance in all cases in Table~\ref{tab:20} and Table~\ref{tab:more} by a considerable margin.
\begin{table*}[!t]
\setlength\tabcolsep{5pt}
\centering
\caption{\small Robust accuracy of various models on different sparse attacks. Our sAT model is trained with $\epsilon=1200$. \textbf{(a)} Results on \textbf{ImageNet-100} \citep{imagenet_cvpr09}. \textbf{(b)} Results on \textbf{GTSRB} \citep{stallkamp2012man}. Note that the results of Sparse-RS (RS) with tuned hyperparameters are reported. All models are RN-34, and are evaluated on $500$ samples. The results of SparseFool (SF) and PGD$_0$ are included due to their poor performance, and CornerSearch (CS) is not evaluated here due to its high computational complexity, i.e. nearly 1 week on one GPU for each run.} \label{tab:more}
\vspace{-1em}
\subtable[\textbf{ImageNet-100}, $\epsilon=$ 200]{
\small
    \centering
    \begin{tabular}{l|c|c| c c c |c}
        \toprule[1.5pt]
        \multirow{2}{*}{Model} & \multirow{2}{*}{Clean} & \multicolumn{1}{c|}{Black} & \multicolumn{3}{c|}{White} & \multirow{2}{*}{\textbf{sAA}} \\
          & & RS & SAIF & \textbf{sPGD$_{\mathrm{p}}$}& \textbf{sPGD$_{\mathrm{u}}$}& \\
         \midrule[1pt]
         Vanilla & 83.0 & 0.2  & 0.6 & 0.2 & 0.4 & \textbf{0.0}\\
         \midrule[1pt]
         \multicolumn{7}{l}{$l_0$-adv. trained, $\epsilon=200$}\\
         \midrule[0.5pt]
         PGD$_0$-A & 76.0 &  6.8 & 11.0 & 1.8 & 18.8& \textbf{1.8}\\
         \textbf{sAT} & 86.2 & 61.4 & 69.0 & 78.0 &77.8 & \textbf{61.2}\\
         \bottomrule[1.5pt]
    \end{tabular}
}
\subtable[\textbf{GTSRB}, $\epsilon=$ 600]{
    \centering
    \small

    \begin{tabular}{l|c|c| c c c |c}
        \toprule[1.5pt]
        \multirow{2}{*}{Model}& \multirow{2}{*}{Clean} & \multicolumn{1}{c|}{Black} & \multicolumn{3}{c|}{White} & \multirow{2}{*}{\textbf{sAA}} \\
          & & RS  & SAIF & \textbf{sPGD$_{\mathrm{p}}$}& \textbf{sPGD$_{\mathrm{u}}$}& \\
         \midrule[1pt]
         Vanilla & 99.9 &  18.0 & 9.5 & \textbf{0.3} & \textbf{0.3} & \textbf{0.3}\\
         \midrule[1pt]
         \multicolumn{7}{l}{$l_0$-adv. trained, $\epsilon=600$}\\
         \midrule[0.5pt]
         PGD$_0$-A & 99.8 &37.6 & 13.8 & \textbf{0.0}& \textbf{0.0}& \textbf{0.0}\\
         \textbf{sAT} & 99.8 & 88.4 & 96.2 & 88.6 & 96.2 & \textbf{85.4}\\
         \bottomrule[1.5pt]
    \end{tabular}
}
    \vspace*{-1em}
\end{table*}

In the case of $l_0$ adversarial training, the models are adversarially trained against sparse attacks. However, Figure \ref{fig:iter_k} illustrates that the performance of RS attack on standard, $l_1$-, $l_2$- and $l_\infty$-trained models drastically deteriorates with limited iterations (e.g., smaller than $100$), so RS is not suitable for adversarial training where we need to generate strong adversarial perturbations in limited number iterations.
Empirical evidence suggests that employing RS with $20$ iterations for adversarial training, i.e., the same number of iterations as in other methods, yields trivial performance, so it is not included in Table~\ref{tab:20} or Table~\ref{tab:more} for comparison.
In addition, models trained by PGD$_0$-A and PGD$_0$-T, which generate $l_0$ bounded perturbations, exhibit poor robustness to various attack methods, especially sAA.
By contrast, the models trained by sAT and sTRADES show the strongest robustness, indicated by the comprehensive sAA method and all other attack methods. 
Compared with sAT, sTRADES achieves better performance in both robustness and accuracy. 
Finally, models trained with $l_1$ bounded perturbations are the most robust ones among existing non-$l_0$ training methods. It could be attributed to the fact that $l_1$ norm is the tightest convex relaxation of $l_0$ norm \citep{Bittar2021BestCL}. From a qualitative perspective, $l_1$ attacks also generate relatively sparse perturbations \citep{jiang2023towards}, which makes the corresponding model robust to sparse perturbations to some degree. 

Our results indicate sPGD and RS can complement each other.
Therefore, sAA, an AutoAttack-style attack that ensembles both attacks achieves the state-of-the-art performance on all models.
It is designed to have a similar computational complexity to AutoAttack in $l_\infty$, $l_2$ and $l_1$ cases.

\subsection{Robustness against Structured Sparse Perturbations}
\begin{table*}[!t]
\setlength\tabcolsep{5pt}
\centering
\caption{\small Robust accuracy of various models on different structured sparse attacks.
\textbf{(a)} Results on \textbf{CIFAR-10}, all models are PRN-18.
\textbf{(b)} Results on \textbf{ImageNet-100}, the test set contains $500$ samples, all models are RN-34. Note that the evaluated group sparsity levels are approximately equivalent to the $l_0$ sparsity levels used in Table \ref{tab:20} and \ref{tab:more}, and the Sparse-RS (RS) used here is the proposed extended version.} \label{tab:struct}
\vspace{-1em}
\begin{minipage}{0.51\textwidth}
    \subtable[\small \textbf{CIFAR-10}]{
    \small
    \resizebox{\textwidth}{!}{
    \begin{tabular}{l|c|c |c c c |c}
        \toprule[1.5pt]
        \multirow{2}{*}{Model} & \multirow{2}{*}{Clean} & Black & \multicolumn{3}{c|}{White} & \multirow{2}{*}{\textbf{sAA}} \\
          & & RS  & LOAP & \textbf{sPGD$_{\mathrm{p}}$}& \textbf{sPGD$_{\mathrm{u}}$}& \\
         \midrule[1pt]
         \multicolumn{7}{l}{row, $\epsilon=1$}\\
         \midrule[0.5pt]
         Vanilla & 93.9 & 22.5 & - & 1.0 & 1.3 & \textbf{1.0} \\
         \textbf{sTRADES-$l_0$} & 89.8 &83.1& - &  54.6 & 65.0 & \textbf{53.9}\\
         \textbf{sTRADES-row} & 89.3 & 85.9& - &  77.3 & 83.7 & \textbf{77.3} \\
         \midrule[1pt]
         \multicolumn{7}{l}{$3\times 3$ patch, $\epsilon=2$}\\
         \midrule[0.5pt]
         Vanilla & 93.9 & 6.7 & 4.5 & 1.9& 6.6 & \textbf{1.7} \\
         LOAP+TRADES & 92.8 & 75.6& 66.4 &  47.4 & 71.7 & \textbf{47.1} \\
         \textbf{sTRADES-$l_0$} & 89.8 &65.5& 60.0&  46.9 & 67.5 & \textbf{46.7}\\
         \textbf{sTRADES-p3x3} & 87.9 & 77.5 & 81.4&   72.8 & 81.7 & \textbf{72.3} \\
         \midrule[1pt]
         \multicolumn{7}{l}{$5\times 5$ patch, $\epsilon=1$}\\
         \midrule[0.5pt]
         Vanilla & 93.9 & 7.5 & 2.3 & 1.7 & 3.2 & \textbf{1.7} \\
         LOAP+TRADES & 92.4 & 76.9& 61.7 & 37.0 & 60.8 & \textbf{36.0} \\
         \textbf{sTRADES-$l_0$} & 89.8 &56.4& 35.0 &  26.7 & 44.3 & \textbf{26.7}\\
         \textbf{sTRADES-p5x5} & 89.5 & 72.9& 73.1 &  56.0 & 73.3 & \textbf{55.9} \\
         \bottomrule[1.5pt]
    \end{tabular}
    }
}
\end{minipage}
\begin{minipage}{0.48\textwidth}
    \subtable[\small \textbf{ImageNet-100}]{
    \small
    \resizebox{\textwidth}{!}{
    \begin{tabular}{l|c|c |c c c |c}
        \toprule[1.5pt]
        \multirow{2}{*}{Model} & \multirow{2}{*}{Clean} & Black & \multicolumn{3}{c|}{White-Box} & \multirow{2}{*}{\textbf{sAA}} \\
          & & RS  & LOAP & \textbf{sPGD$_{\mathrm{p}}$}& \textbf{sPGD$_{\mathrm{u}}$}& \\
         \midrule[1pt]
         \multicolumn{7}{l}{row, $\epsilon=1$}\\
         \midrule[0.5pt]
         Vanilla & 83.0 & 54.8 & - & 5.2 & 8.0 & \textbf{5.2} \\
         \textbf{sAT-$l_0$} & 86.2 &78.6& - & 65.8 & 73.8 & \textbf{65.8}\\
         \textbf{sAT-row} & 81.0 & 78.2& - &  76.6 & 78.4 & \textbf{76.6}\\
         \midrule[1pt]
         \multicolumn{7}{l}{$10\times 10$ patch, $\epsilon=2$}\\
         \midrule[0.5pt]
         Vanilla &83.0 & 13.0& 4.6 & 4.6& 6.4 & \textbf{4.0} \\
         LOAP+AT &83.8&57.0 & 72.4 & 0.0 &0.4 & \textbf{0.0}\\
         \textbf{sAT-$l_0$} & 86.2 &27.8& 10.8&  10.6 & 33.0 & \textbf{10.0}\\
         \textbf{sAT-p10x10} &  81.6 & 64.0 & 62.8& 35.8 & 73.2 & \textbf{35.8}\\
         \midrule[1pt]
         \multicolumn{7}{l}{$14\times 14$ patch, $\epsilon=1$}\\
         \midrule[0.5pt]
         Vanilla &  83.0& 7.5 & 2.3 & 1.7 & 12.0 & \textbf{1.7} \\
         LOAP+AT &  83.4& 53.0& 20.8 &  0.0 & 530& \textbf{0.0}\\
         \textbf{sAT-$l_0$} & 86.2 & 28.3& \textbf{10.2} &  15.4 & 34.8 & 14.6\\
         \textbf{sAT-p14x14} & 80.0 &57.2 & 73.0 &41.2  & 71.2& \textbf{39.6}\\
         \bottomrule[1.5pt]
    \end{tabular}
    }
    }
\end{minipage}
\end{table*}
Apart from unstructured sparse perturbations, we evaluate the effectiveness of our method in Algorithm~\ref{alg:structure} in generating structured sparse perturbations in this subsection. Given the paucity of available methods for comparison, we focus on comparing our method with the extended Sparse-RS (RS) and LOAP \citep{rao2020adversarial}, which is a white-box approach generating adversarial patches, on CIFAR-10 and ImageNet-100. For a comprehensive evaluation, we include the results of different types of structured sparse perturbations, e.g., row and patches with different group sparsity level and different sizes. It should be noted that the unstructured $l_0$ norms of the structured perturbations studied here are approximately the same as the perturbations in Table \ref{tab:20} and \ref{tab:more}. In addition, we evaluate the attacks on vanilla models, models trained with unstructured $l_0$ bounded perturbations, and those trained with the specific structured sparse perturbations.

The results in Table \ref{tab:struct} indicate that the proposed sAA also achieves the state-of-the-art performance in generating structured sparse perturbations in almost all cases. Furthermore, the models trained with our methods exhibit the strongest robustness against structured sparse perturbations. As anticipated, models trained on specific structured sparse adversarial examples significantly outperform those trained on unstructured sparse adversarial examples. This highlights the inherent limitation of adversarially trained models in maintaining robustness against unseen types of perturbations during training.

\subsection{Adversarial Watermarks}
Although only a few pixels are perturbed in sparse attacks, such perturbations are still perceptible due to their unconstrained magnitude. In this subsection, we additionally constrain the $l_\infty$ norm of magnitude $\vp$, meaning that the feasible set for $vp$ is rewritten as $\gS_\vp = \{\vp \in \mathbb{R}^{h\times w\times c} \vert \|\vp\|_\infty \leq \epsilon_\infty,~0 \leq \vx + \vp \leq 1\}$. Combining with structured sparsity constraint, we can generate alleged \textit{adversarial watermarks} with our method. 
\begin{table}[h]
\vspace{-1em}
    \centering
    \caption{\small Robust accuracy of vanilla model (RN-34) against adversarial watermarks with different $l_\infty$ adversarial budgets. The experiment is conducted on ImageNet-100.}
    \label{tab:wm}
    \vspace{-1em}
    \subtable[18$\times$18 circle]{
    \small
    \begin{tabular}{c|c c c c}
        \toprule[1.5pt]
        $\epsilon_{\infty}$ & $8/255$ & $16/255$ & $32/255$ & $64/255$\\
        \midrule[1pt]
        RS & 59.0 & 44.6 & 32.4 & 26.4\\
        \textbf{sPGD}$_{\mathrm{p}}$ & 47.2 & 25.4 & 16.6 & 9.0\\
        \textbf{sPGD}$_{\mathrm{u}}$ & 56.4 & 39.6 & 25.8 & 16.8\\
        \textbf{sAA} & \textbf{44.2} & \textbf{24.0} &  \textbf{15.6} & \textbf{8.8}\\
        \bottomrule[1.5pt]
    \end{tabular}
    }
    \subtable[60$\times$60 letter ``A"]{
    \small
    \begin{tabular}{c|c c c c}
        \toprule[1.5pt]
        $\epsilon_{\infty}$ & $8/255$ & $16/255$ & $32/255$ & $64/255$\\
        \midrule[1pt]
        RS &14.2  & 12.8 & 9.2 & 5.8\\
        \textbf{sPGD}$_{\mathrm{p}}$ & 3.8 & 2.6 & 2.4 & 2.2\\
        \textbf{sPGD}$_{\mathrm{u}}$ & 7.4 & 4.8 & 2.8  & 2.0\\
        \textbf{sAA} &\textbf{2.0}  & \textbf{0.8} & \textbf{0.8}  & \textbf{0.6}\\
        \bottomrule[1.5pt]
    \end{tabular}
    }
    \vspace{-1em}
\end{table}

To evaluate the performance of our method in generating adversarial watermarks, we conduct experiments with different patterns and different $\l_\infty$ adversarial budgets. From Table \ref{tab:wm}, we can find that when the size of the pattern is large, like $60\times 60$ letter ``A" with $1747$ non-zero elements, our attack can still achieve decent performance under different $l_\infty$ adversarial budgets. In contrast, when we adopt a $18\times 18$ circle pattern with $208$ non-zero elements, the attack success rate abruptly declines as the $l_\infty$ adversarial budget decreases. Nevertheless, from the aspect of attack success rate, our approaches still outperform Sparse-RS by a large margin.

\subsection{Comparison under Different Iteration Numbers and Different Sparsity Levels} \label{sec:iter_k}
\begin{figure*}[h]
\vspace{-1em}
    \centering
    \subfigure[CIFAR-10, $\epsilon=20$]{\includegraphics[width=0.315\textwidth]{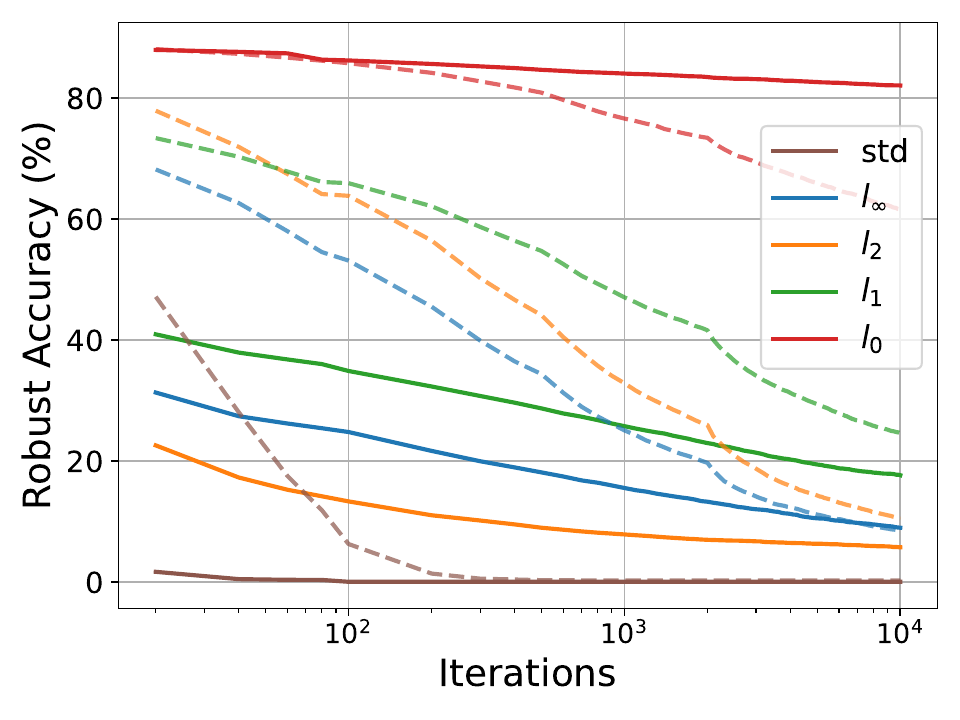}}
    \subfigure[ImageNet-100, $\epsilon=200$ ]{\includegraphics[width=0.315\textwidth]{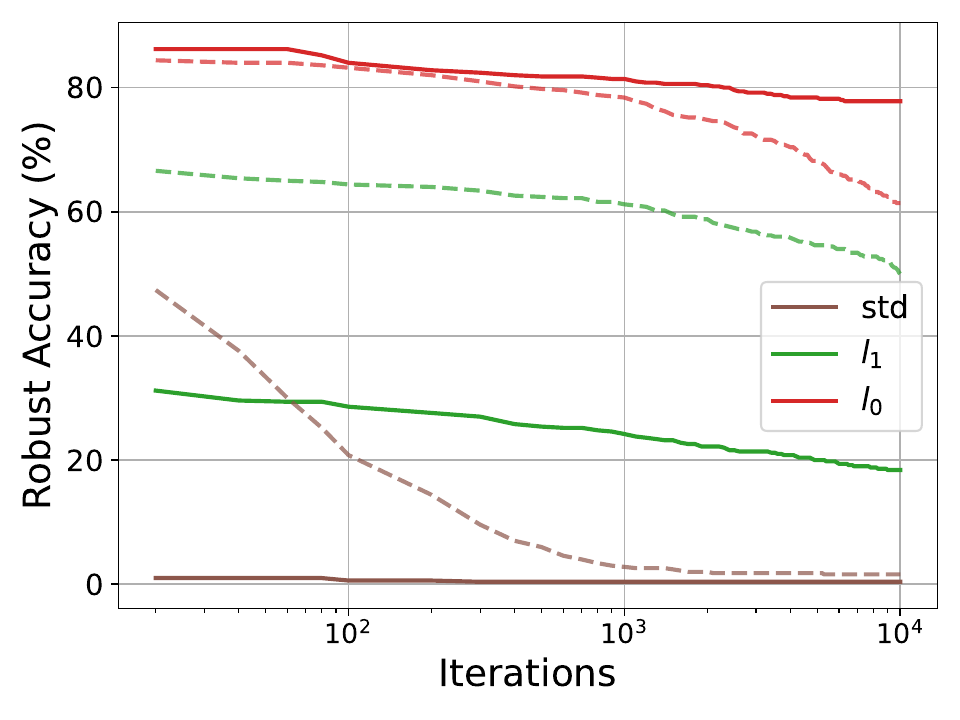}}
    \subfigure[CIFAR-10, $t=10000$]{\includegraphics[width=0.315\textwidth]{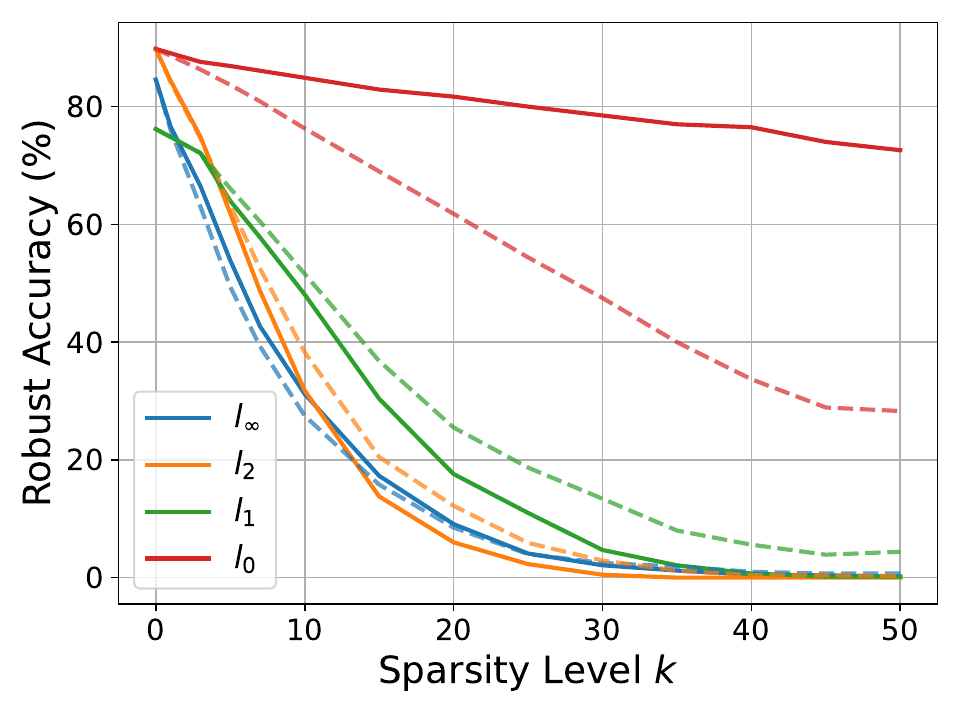}}
    \caption{\small Comparison between sPGD and RS attack under different iteration numbers and different sparsity levels. \textbf{(a)} Different iteration number comparison on CIFAR-10, $\epsilon=20$. ResNet18 (std), PORT ($l_\infty$ and $l_2$) \citep{sehwag2021}, $l_1$-APGD ($l_1$) \citep{croce2021mind} and sTRADES ($l_0$) are evaluated. \textbf{(b)} Different iteration number comparison on ImageNet-100, $\epsilon=200$. ResNet34 (std), Fast-EG-$l_1$ ($l_1$) \cite{jiang2023towards} and sAT ($l_0$) are evaluated. In (a) and (b), the total iteration number ranges from $20$ and $10000$. For better visualization, the x-axis is in the log scale. \textbf{(c)} Different sparsity comparison on CIFAR-10. The evaluated models are the same as those in (a). The $\epsilon$ ranges from $0$ and $50$. The number of total iterations is set to $10000$. \textbf{Note that the results of sPGD and RS attack are shown in solid lines and dotted lines, respectively.} }
    \label{fig:iter_k}
    \vspace{-1em}
\end{figure*}
In this subsection, we further compare our method sPGD, which is a white-box attack, with RS attack, the strongest black-box attack in the previous section. Specifically, we compare them under various iteration numbers on CIFAR-10 and ImageNet-100, which have different resolutions. In addition, we also compare sPGD and RS under different sparsity levels on CIFAR-10.

As illustrated in Figure \ref{fig:iter_k} (a) and (b), sTRADES has better performance than other robust models by a large margin in all iterations of both sPGD and RS attacks, which is consistent with the results in Table \ref{tab:20},~\ref{tab:more} and~VIII.
For vanilla and other robust models, although the performances of both sPGD and RS attack get improved with more iterations, sPGD significantly outperforms RS attack when the iteration number is small (e.g. $<1000$ iterations), which makes it feasible for adversarial training. Similar to other black-box attacks, the performance of RS attack drastically deteriorates when the query budget is limited. 
In addition, our proposed gradient-based sPGD significantly outperforms RS on ImageNet-100, where the search space is much larger than that on CIFAR-10, i.e., higher image resolution and higher sparsity level $\epsilon$. This suggests that our approach is scalable and shows higher efficiency on high-resolution images. 
Furthermore, although the performance of RS does not converge even when the iteration number reaches $10000$, a larger query budget will make it computationally impractical. Following the setting in \citet{croce2022sparse}, we do not consider a larger query budget in our experiments, either.

Furthermore, we can observe from Figure \ref{fig:iter_k} (c) that RS attack shows slightly better performance only on the $l_\infty$ model and when $\epsilon$ is small. The search space for the perturbed features is relatively small when $\epsilon$ is small, which facilitates heuristic black-box search methods like RS attack. As $\epsilon$ increases, sPGD outperforms RS attack in all cases until both attacks achieve almost $100\%$ attack success rate. 

\subsection{Efficiency of sPGD} \label{sec:app_runtime}
To further showcase the efficiency of our approach, we first compare the distributions of the number of iterations needed by sPGD and RS to successfully generate adversarial samples. Figure \ref{fig:iter_hist} illustrates the distribution of the iteration numbers needed by sPGD and RS to successfully generate adversarial samples. For $l_\infty$ and $l_1$ robust models, our proposed sPGD consumes distinctly fewer iteration numbers to successfully generate an adversarial sample than RS, the strongest black-box attack in Table \ref{tab:20}, while maintaining a high attack success rate. Similar to the observations in Figure \ref{fig:iter_k}, the model trained by sTRADES suffers from gradient masking. However, RS still requires a large query budget to successfully generate an adversarial sample. This further demonstrates the efficiency of sPGD, which makes adversarial training feasible.
\begin{figure*}[!ht]
    \centering
    \subfigure[$l_\infty$ PORT \cite{sehwag2021}]{\includegraphics[width=0.315\textwidth]{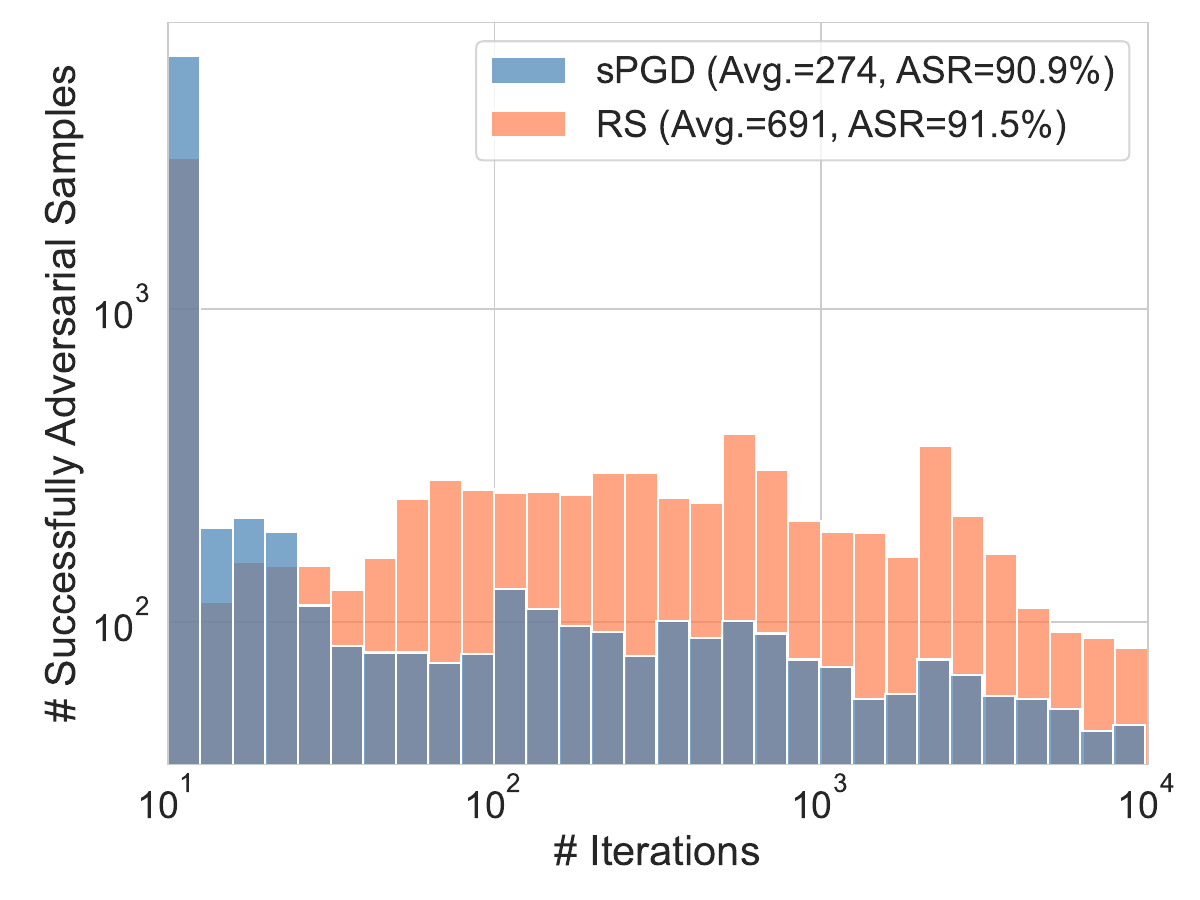}}
    \subfigure[$l_1$-APGD \cite{croce2021mind}]{\includegraphics[width=0.315\textwidth]{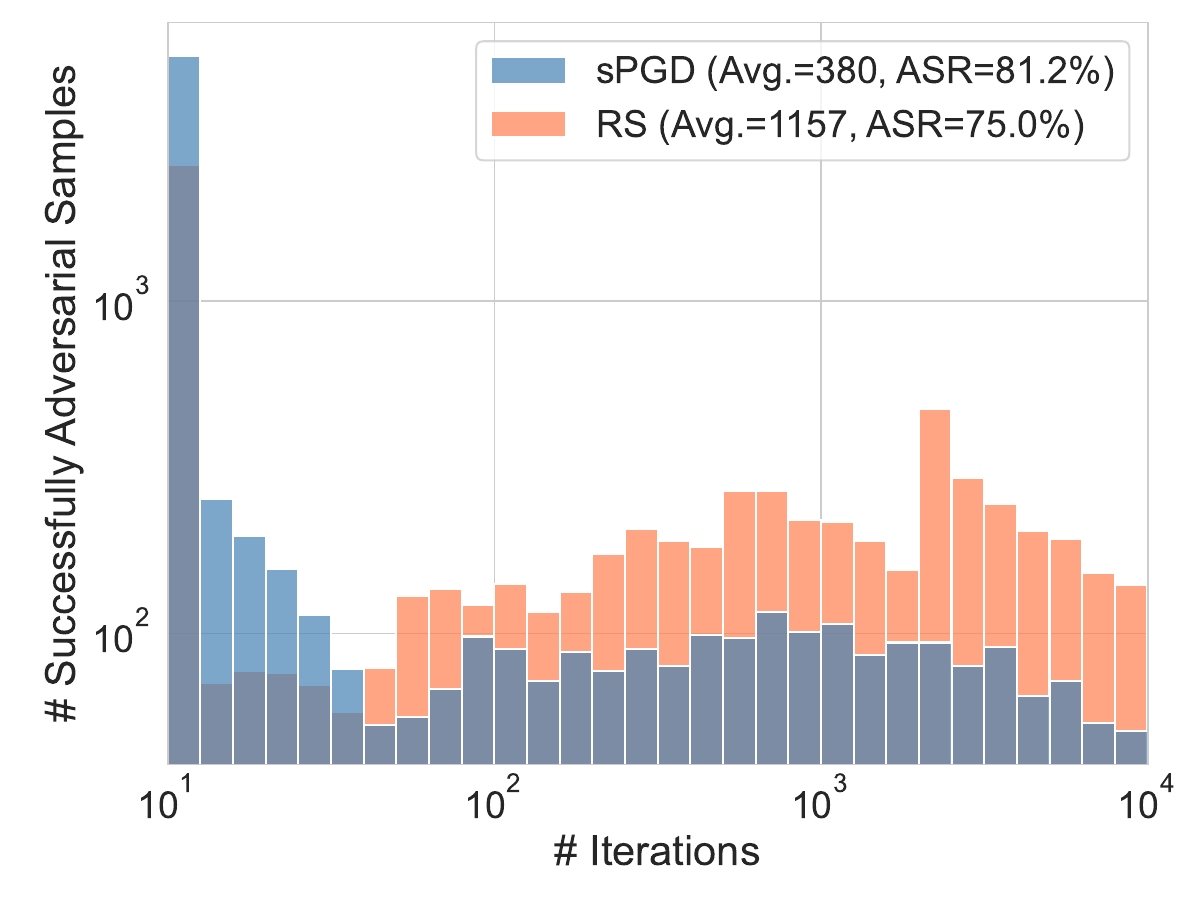}}
    \subfigure[sTRADES]{\includegraphics[width=0.315\textwidth]{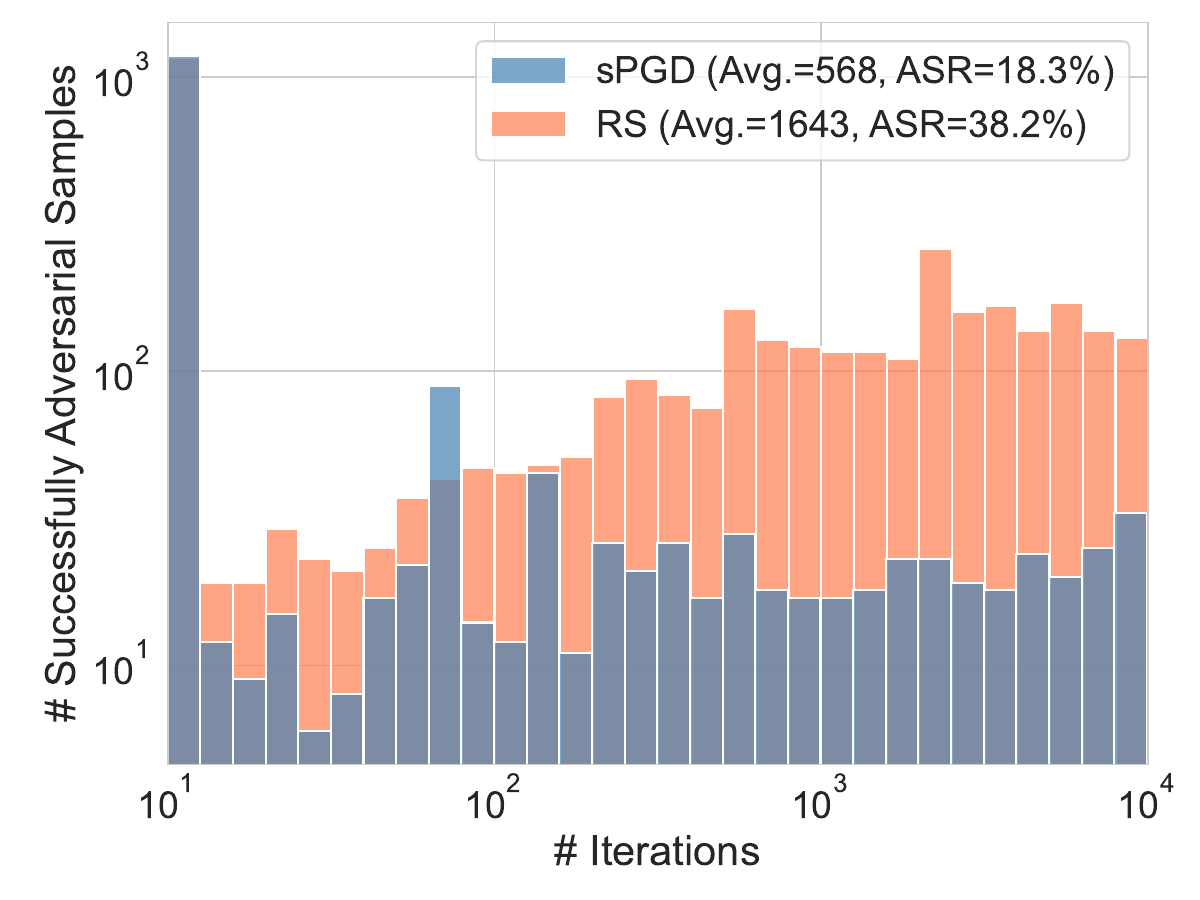}}
    \caption{\small Distribution of the iteration numbers needed by sPGD (blue) and RS (orange) to successfully generate adversarial samples. The results are obtained from different models: \textbf{(a)} $l_\infty$ PORT \cite{sehwag2021}, \textbf{(b)} $l_1$-APGD \cite{croce2021mind} and \textbf{(c)} our sPGD. The average iteration numbers (Avg.) and attack success rate (ASR), i.e., $1-$Robust Acc., are reported in the legend. For better visualization, we clip the minimum iteration number to $10$ and show the x- and y-axis in log scale.}
    \label{fig:iter_hist}
    \vspace{-1em}
\end{figure*}

Additionally, we compare the runtime of sPGD with other attacks in the same configuration as in Table \ref{tab:20}. As shown in Table \ref{tab:runtime}, the proposed sPGD shows the highest efficiency among various attacks. Although sAA consumes more time (approximately $2\times$ sPGD + RS), it can provide a reliable evaluation against $l_0$ bounded perturbation.
\begin{table}[h]
    \centering
    \small
    \caption{\small Runtime of different attacks on 1000 test instances with batch size 500. The sparsity level $\epsilon=20$. The evaluated model is sTRADES. The model is trained on \textbf{CIFAR-10}. The experiments are implemented on NVIDIA Tesla V100.}
    \small
    \label{tab:runtime}
    \begin{tabular}{c|c c c c }
        \toprule[1.5pt]
         Attack & CS & RS  & SF & PGD$_0$ \\
        \midrule[1pt]
         Runtime & 604 min & 59 min& 92 min &750 min\\
         \midrule[1.5pt]
          Attack & SAIF & \textbf{sPGD$_{\mathrm{p}}$}& \textbf{sPGD$_{\mathrm{u}}$}& \textbf{sAA}\\
        \midrule[1pt]
         Runtime & 122 min&\textbf{42 min}&45 min& 148 min\\
        \bottomrule[1.5pt]
    \end{tabular}
\end{table}

\subsection{Transferability of Adversarial Perturbations} \label{sec:trans}
\begin{table}[h]
\vspace{-1em}
    \centering
    \small
    \caption{\small Transferability of RS and sPGD between VGG11 (V) and ResNet18 (R) on CIFAR-10 with $\epsilon=$ 20. Attack success rate (ASR) is reported. The perturbations are generated on the source model (left), and are evaluated on the target model (right). Note that vanilla models are evaluated, and $\alpha$ and $\beta$ of sPGD are set to 0.75.}
    \small
    \label{tab:trans}
    \begin{tabular}{c|c c c c}
        \toprule[1.5pt]
        ~ & V$\rightarrow$V & V$\rightarrow$R & R$\rightarrow$R & R$\rightarrow$V \\
        \midrule[1pt]
        RS & 53.9 & 28.4 & 33.0 & 37.4\\
        sPGD$_{\mathrm{p}}$ & 58.0 & \textbf{43.9} & 50.8 & 48.0\\
        sPGD$_{\mathrm{u}}$ & \textbf{64.9} & 40.0 & \textbf{52.7} &\textbf{56.7} \\
        \bottomrule[1.5pt]
    \end{tabular}
    \vspace{-0.5em}
\end{table}
To evaluate the transferability of our attack across different models and architectures, we generate adversarial perturbations based on one model and report the attack success rate (ASR) on another model. As shown in Table \ref{tab:trans}, sPGD exhibits better transferability than the most competitive baseline, Sparse-RS (RS) \citep{croce2022sparse}. This further demonstrates the effectiveness of our method and its potential application in more practical scenarios.

\subsection{Attack Object Detection and Segmentation Models}

The versatility of our method allows it to be applied to a variety of tasks beyond classification, e.g., object detection and segmentation. For implementation, one could simply replace the loss function $\mathcal{L}$ in Algorithms \ref{alg:main} and \ref{alg:structure} with a specific detection or segmentation loss function.

In Table \ref{tab:yolo}, we report the mAP50-95 of YOLOv8-M \cite{varghese2024yolov8} in object detection and segmentation tasks when the input image is manipulated by sparse perturbations, including both structured and unstructured ones. 
In addition, we include the results of RS and SAIF as a comparison. Since SAIF is not able to generate structured sparse perturbations, we do not include its results in the experiments of generating patch perturbations.
The results indicate that sPGD, as an attack framework, is still effective in these two tasks. 
Considering higher resolution images in the COCO dataset \cite{lin2014microsoft} ($640\times 640$), we use approximately the same ratio between the number of perturbed pixels and the total number of pixels as in Table \ref{tab:more} when setting the perturbation budgets $\epsilon$, i.e., $0.4\%$ / $0.8\%$ of the pixels. 
Compared with structured perturbations, unstructured perturbations are spatially more dispersed, making them more effective for multi-object images in these tasks.


\begin{table}[h]
\vspace{-1em}
    \centering
    \small
    \caption{\small Evaluation of YOLOv8-M performance on clean and adversarial COCO dataset \cite{lin2014microsoft} in object detection and segmentation tasks. The metric is mAP50-95. We test the performance on the adversarial examples generated by different attacks, where both unstructured and structured cases are considered.}\label{tab:yolo}
    \vspace{-1em}
    \subtable[\small \textbf{Object Detection}]{
    \small
    \begin{tabular}{c|c|c c c c}
        \toprule[1.5pt]
        Mode & $\epsilon$& Clean & RS & SAIF & sPGD\\
        \midrule[1pt]
        \multirow{2}{*}{$l_0$}& 1600 & 55.1 & 17.1 & 15.8 & 15.5 \\
        ~ & 3200 & 55.1 & 8.1 & 7.4&6.6\\
        \midrule[1pt]
        \multirow{2}{*}{$4\times 4$ patch}& 100 & 55.1 & 47.1 & - & 40.1\\
        ~ & 200 &55.1 & 40.8& - & 34.5\\
        \bottomrule[1.5pt]
    \end{tabular}
    }
    \subtable[\small \textbf{Segmentation}]{
    \small
    \begin{tabular}{c|c|c c c c}
        \toprule[1.5pt]
        Mode & $\epsilon$& Clean & RS & SAIF & sPGD\\
        \midrule[1pt]
        \multirow{2}{*}{$l_0$}& 1600 & 53.6 & 17.6 & 16.4 & 15.6 \\
        ~ & 3200 & 53.6 & 8.8 & 8.2 &7.4\\
        \midrule[1pt]
        \multirow{2}{*}{$4\times 4$ patch}& 100 & 53.6 & 46.5 & - & 39.1\\
        ~ & 200 &53.6 & 39.7& - & 33.1\\
        \bottomrule[1.5pt]
    \end{tabular}
    }
    \vspace{-1em}
\end{table}

\subsection{Visualization} \label{sec:vis}
\begin{figure*}[ht]
    \centering
    \subfigure[$l_0$, $\epsilon=200$]{\includegraphics[width=0.23\textwidth]{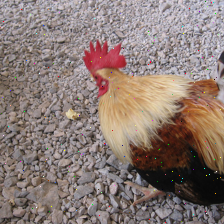}}
    \subfigure[$l_0$, $\epsilon=200$]{\includegraphics[width=0.23\textwidth]{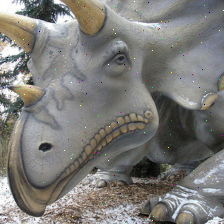}}
    \subfigure[row, $\epsilon=1$]{\includegraphics[width=0.23\textwidth]{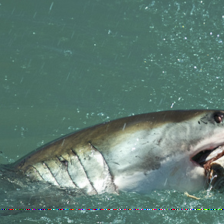}}
    \subfigure[row, $\epsilon=1$]{\includegraphics[width=0.23\textwidth]{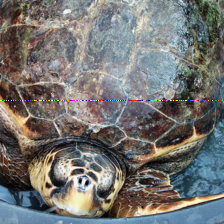}}
    \\
    \subfigure[star, $25\times25$, $\epsilon=1$]{\includegraphics[width=0.23\textwidth]{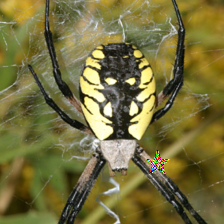}}
    \subfigure[star, $25\times25$, $\epsilon=1$]{\includegraphics[width=0.23\textwidth]{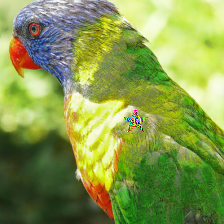}}
    \subfigure[``A", $60\times60$, $\epsilon_\infty=32/255$]{\includegraphics[width=0.23\textwidth]{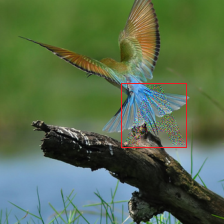}}
    \subfigure[``A", $60\times60$, $\epsilon_\infty=32/255$]{\includegraphics[width=0.23\textwidth]{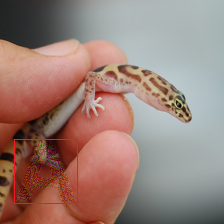}}
    \caption{\small Visualization of different sparse adversarial examples in ImageNet-100. The model is vanilla ResNet-34. The predictions before (left) and after (right) attack are (a) cock$\rightarrow$hen, (b) triceratops$\rightarrow$spotted salamander, (c) great white shark$\rightarrow$tench, (d) loggerhead$\rightarrow$leatherback turtle, (e) black and gold garden spider$\rightarrow$garden spider, (f) lorikeet$\rightarrow$macaw, (g) bee eater$\rightarrow$coucal, and (h) banded gecko$\rightarrow$alligator lizard. \textbf{Note that the red squares in (g) and (h) are just for highlighting the perturbation position and not part of perturbations.}}
    \vspace{-1em}
    \label{fig:visual}
\end{figure*}

For a better visualization, we present some adversarial examples with different types of sparse perturbations in Figure \ref{fig:visual}, including $l_0$ bounded perturbations, row-wise perturbations, star-like perturbations, and adversarial watermarks of letter ``A". The attack is sPGD, and the model is vanilla ResNet-34 trained on ImageNet-100. More adversarial examples are shown in Appendix D.
\section{Conclusion}
In this paper, we propose an effective and efficient white-box attack named sPGD to generate both unstructured and structured perturbations. sPGD obtains the state-of-the-art performance among white-box attacks. Based on this, we combine it with black-box attacks for more comprehensive and reliable evaluation of robustness against sparse perturbations. Our proposed sPGD is particularly effective in the realm of limited iteration numbers. Due to its efficiency, we incorporate sPGD into the framework of adversarial training to obtain robust models against sparse perturbations. The models trained with our method demonstrate the best robust accuracy. 

\section*{Acknowledgments}

This work is supported by the NSFC project (No. 62306250). It is supported by the internal funds of City University of Hong Kong (No. 9610614 and No. 9229130). We also thank Yixiao Huang for his contribution to this work.
\bibliographystyle{IEEEtranN}
\bibliography{references}
\vfill
\newpage
\begin{wrapfigure}{l}{0.2\textwidth}
  \begin{center}
    \includegraphics[width=0.2\textwidth]{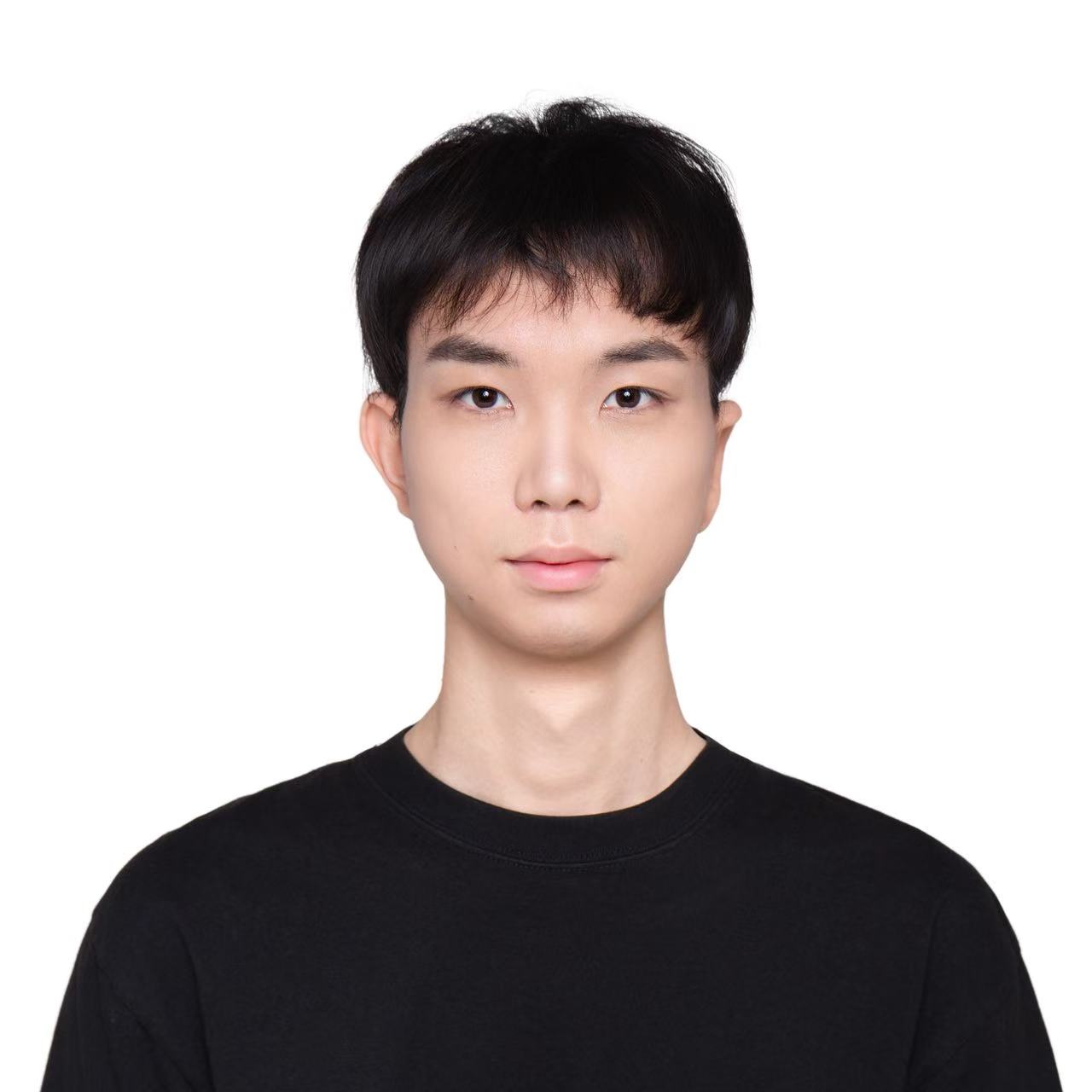}
  \end{center}
\end{wrapfigure}    
\textbf{Xuyang Zhong} is currently a Ph.D. student at the department of computer science, City University of Hong Kong and is supervised by Chen Liu. He obtained his Master's degree from Technical University of Munich in 2022 and his Bachelor's degree from Beijing Institute of Technology in 2020. His research interest mainly focuses on the optimization and robustness in deep learning.\\


\begin{wrapfigure}{l}{0.2\textwidth}
  \begin{center}
    \includegraphics[width=0.2\textwidth]{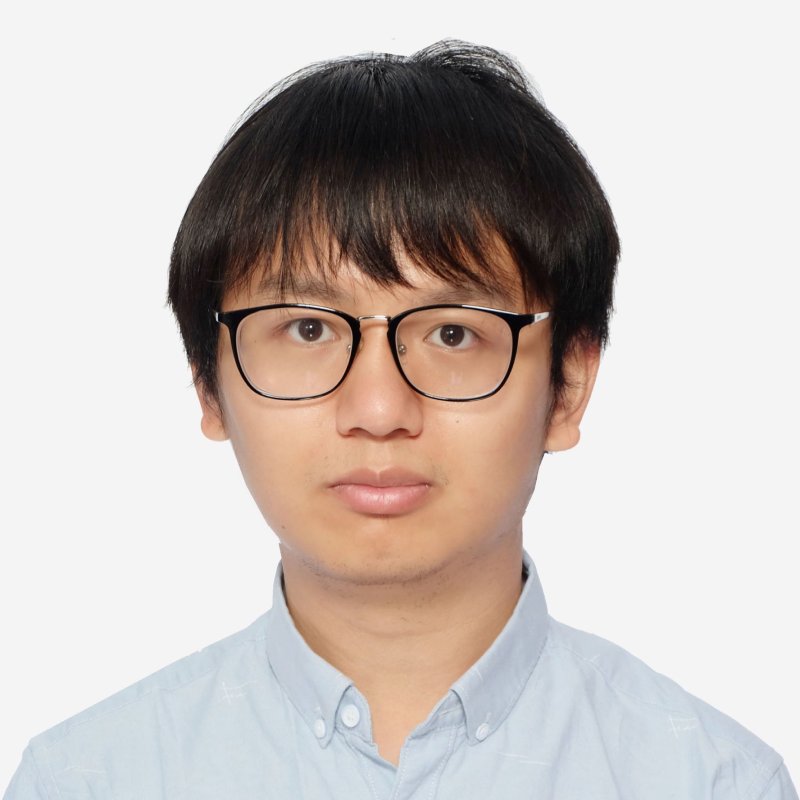}
  \end{center}
\end{wrapfigure}
\textbf{Chen Liu} is currently an assistant professor at the department of computer science, City University of Hong Kong. He obtained his Ph.D. degree from \'Ecole Polytechnique F\'ed\'erale de Lausanne (EPFL) in August 2022 and was supervised by Prof. Sabine S\"usstrunk and Dr. Mathieu Salzmann. Previously, he obtained his Master degree from \'Ecole Polytechnique F\'ed\'erale de Lausanne (EPFL) in 2017 and Bachelor degree from Tsinghua University in 2015, both in Computer Science. Chen's research interest focuses on machine learning, optimization, robustness and privacy. He was supported by Microsoft Research Ph.D Scholarship programme between 2017 and 2019.
\vfill
\newpage
\clearpage
\newpage
\section*{Sparse-PGD: A Unified Framework for Sparse Adversarial Perturbations Generation}
\appendix
\subsection{Implementation Details} \label{sec:app_imple}

 In experiments, we mainly focus on the cases of the sparsity of perturbations $\epsilon=10, 15$ and $20$, where $\epsilon=||\sum_{i=1}^{c} \bm{\delta}^{(i)}||_0$ or $||\vm||_0$, $\bm{\delta}^{(i)}\in\sR^{h\times w}$ is the $i$-th channel of perturbation $\bm{\delta}\in\sR^{h\times w \times c}$, and $\vm\in\sR^{h\times w \times 1}$ is the sparsity mask in the decomposition of $\bm{\delta} = \vp \odot \vm$, $\vp\in\sR^{h\times w \times c}$ denotes the magnitude of perturbations.
 
To exploit the strength of these attacks in reasonable running time, we run all these attacks for either $10000$ iterations or the number of iterations where their performances converge. More details are elaborated below.

\textbf{CornerSearch} \citep{croce2019sparse}: For CornerSearch, we set the hyperparameters as following: $N = 100, N_{iter} = 3000$, where $N$ is the sample size of the one-pixel perturbations, $N_{iter}$ is the number of queries. For both CIFAR-10 and CIFAR-100 datasets, we evaluate the robust accuracy on $1000$ test instances due to its prohibitively high computational complexity. 

\textbf{Sparse-RS} \citep{croce2022sparse}: For Sparse-RS, we set $\alpha_{\text{init}} = 0.8$, which controls the set of pixels changed in each iteration. Cross-entropy loss is adopted. Following the default setting in \citep{croce2022sparse}, we report the results of untargeted attacks with the maximum queries up to $10000$. 

\textbf{SparseFool} \citep{2018SparseFool}:
We apply SparseFool following the official implementation and use the default value of the sparsity parameter $\lambda = 3$. The maximum iterations per sample is set to $3000$. Finally, the perturbation generated by SparseFool is projected to the $l_0$ ball to satisfy the adversarial budget.

\textbf{PGD}$_{\bm{0}}$ \citep{croce2019sparse}: For PGD${_0}$, we include both untargeted attack and targeted attacks on the top-$9$ incorrect classes with the highest confidence scores. We set the step size to $\eta = 120000 / 255$. Contrary to the default setting, the iteration numbers of each attack increase from $20$ to $300$. Besides, $5$ restarts are adopted to boost the performance further.

\textbf{SAIF} \citep{imtiaz2022saif}: Similar to $\mathrm{PGD_0}$, we apply both untargeted attack and targeted attacks on the top-$9$ incorrect classes with $300$ iterations per attack, however, the query budget is only $100$ iterations in the original paper \citep{imtiaz2022saif}. We adopt the same $l_{\infty}$ norm constraint for the magnitude tensor $p$ as in sPGD.

\textbf{Sparse-PGD (sPGD)}: Cross-entropy loss is adopted as the loss function of both untargeted and targeted versions of our method. The small constant $\gamma$ to avoid numerical error is set to $2\times10^{-8}$. The number of iterations $T$ is $10000$ for all datasets to ensure fair comparison among attacks in Table~\ref{tab:20}. For generating $l_0$ bounded perturbations, the step size for magnitude $\vp$ is set $\alpha=0.25\times\epsilon_\infty$; the step size for continuous mask $\widetilde{\vm}$ is set $\beta=0.25\times \sqrt{h\times w}$, where $h$ and $w$ are the height and width of the input image $\vx\in\sR^{h\times w \times c}$, respectively; the tolerance for reinitialization $t$ is set to $3$. For generating structured sparse perturbations, we let $\alpha=0.0125\times\epsilon_\infty,~\beta=0.0125\times \sqrt{h\times w}$ and $t=50$.

\textbf{Sparse-AutoAttack (sAA)}: It is a cascade ensemble of five different attacks, i.e., \textbf{a)} untargeted sPGD with unprojected gradient (sPGD$_{\mathrm{u}}$), \textbf{b)} untargeted sPGD with sparse gradient (sPGD$_{\mathrm{p}}$),
and \textbf{c)} untargeted Sparse-RS. The hyper-parameters of sPGD are the same as those listed in the last paragraph. 

\textbf{Adversarial Training}: sPGD is adopted as the attack during the training phase, the number of iterations is $20$, and the backward function is randomly selected from the two different backward functions for each batch. For sTRADES, we only compute the TRADES loss when training, and generating adversarial examples is based on cross-entropy loss.
We use PreactResNet18 \citep{he2016identity} with softplus activation \citep{dugas2000incorporating} for experiments on CIFAR-10 and CIFAR-100 \cite{krizhevsky2009learning}, and ResNet34 \cite{he2016deep} for experiments on ImageNet-100 \cite{imagenet_cvpr09} and GTSRB \cite{stallkamp2012man}. We train the model for $100$ epochs on CIFAR-10 and CIFAR-100, for $40$ epochs on ImageNet-100, and for $20$ epochs on GTSRB. The training batch size is $128$ on CIFAR-10 and CIFAR-100, and $32$ on ImageNet-100 and GTSRB. The optimizer is SGD with a momentum factor of $0.9$ and weight decay factor of $5\times 10^{-4}$. The learning rate is initialized to $0.05$ and is divided by a factor of $10$ at the $\frac{1}{4}$ and the $\frac{3}{4}$ of the total epochs. The tolerance for reinitialization is set to $10$.

\subsection{More Results of Section \ref{sec:evaluation}} \label{sec:app_more_res}
In this subsection, we present the robust accuracy on CIFAR-10 and CIFAR-100 with the sparsity level $\epsilon = 10$  in Table \ref{tab:10_15}. 
The observations with different sparsity levels and on different datasets are consistent with those in Table~\ref{tab:20} and~\ref{tab:more}, which indicates the effectiveness of our method. 

\begin{table*}[ht]
\vspace{-1em}
\setlength\tabcolsep{2.5pt}
    \centering
    \caption{\small Robust accuracy of various models on different sparse attacks, where the sparsity level $\epsilon=10$. The models are trained on \textbf{CIFAR-10} and \textbf{CIFAR-100}\citep{krizhevsky2009learning}. Note that we report results of Sparse-RS (RS) with tuned hyperparameters, which outperforms its original version in \citep{croce2022sparse}. CornerSearch (CS) is evaluated on 1000 samples due to its high computational complexity.} \label{tab:10_15}
\vspace{-0.5em}
\begin{minipage}{0.48\textwidth}
    \subtable[\small \textbf{CIFAR-10}]{
    \small
    \resizebox{\textwidth}{!}{
    \small
    \begin{tabular}{l c|c|c c |c c c c c |c}
        \toprule[1.5pt]
        \multirow{2}{*}{Model} & \multirow{2}{*}{Network} & \multirow{2}{*}{Clean} & \multicolumn{2}{c|}{Black} & \multicolumn{5}{c|}{White} & \multirow{2}{*}{\textbf{sAA}} \\
          & & & CS & RS  & SF & PGD$_0$ & SAIF & \textbf{sPGD$_{\mathrm{p}}$}& \textbf{sPGD$_{\mathrm{u}}$}& \\
         \midrule[1pt]
         Vanilla & RN-18 & 93.9 &3.2 & 0.5 &  40.6 & 11.5 & 31.8 &  0.5 & 7.7 &  \textbf{0.5}\\
         
         \midrule[1pt]
         \multicolumn{10}{l}{$l_\infty$-adv. trained, $\epsilon=$ 8/255}\\
         \midrule[0.5pt]
         GD & PRN-18 & 87.4 &  36.8 & 24.5 & 69.9 & 50.3 & 63.0 & 31.0 & 37.3 & \textbf{23.2}\\
         PORT & RN-18 & 84.6 & 36.7& 27.5 & 70.7 & 46.1 & 62.6 & 31.0 &  36.2 & \textbf{25.0}\\
         DKL & WRN-28 & 92.2 & 40.9& 25.0 & 71.9 &54.2 & 64.6& 32.6 & 38.8 &  \textbf{23.7}\\
         DM& WRN-28 & 92.4 & 38.7& 23.7 &68.7& 52.7 & 62.5 & 31.2 & 37.4 &  \textbf{22.6}\\

         \midrule[1pt]
         \multicolumn{10}{l}{$l_2$-adv. trained, $\epsilon=0.5$}\\
         \midrule[0.5pt]
          HAT & PRN-18 & 90.6 & 47.3 & 40.4  &74.6 &  53.5 & 71.4 & 37.3 & 36.7 & \textbf{34.5}\\
         PORT & RN-18 & 89.8 & 46.8 &  37.7 &74.2 &  50.4 & 70.9 & 33.7 & 33.0 &  \textbf{30.6}\\
         DM & WRN-28 & 95.2 & 57.8 &  47.9 &78.3 & 65.5& 80.9& 47.1 & 48.2 & \textbf{43.1}\\
         FDA & WRN-28 & 91.8  & 55.0& 49.4 &79.6& 58.6& 77.5&  46.7 & 49.0 & \textbf{43.8}\\
         
         \midrule[1pt]
         \multicolumn{10}{l}{$l_1$-adv. trained, $\epsilon=$ 12}\\
         \midrule[0.5pt]
         $l_1$-APGD & PRN-18 & 80.7 & 51.4 & 51.1  &74.3 & 60.7 & 68.1 & 47.2 & 47.4 &  \textbf{45.9}\\
         Fast-EG-$l_1$ & PRN-18 & 76.2 & 49.7 & 48.0  &69.7 & 56.7& 63.2& 44.7 & 45.0 & \textbf{43.2}\\
         \midrule[1pt]
         \multicolumn{10}{l}{$l_0$-adv. trained, $\epsilon=$ 10}\\
         \midrule[0.5pt]
         PGD$_0$-A & PRN-18 & 85.8 & 20.7 & 16.1 & 77.1 & 66.1 & 68.7& 33.5 & 36.2 & \textbf{15.1} \\
         PGD$_0$-T & PRN-18  & 90.6 & 22.1 & 14.0 & 85.2 &  72.1& 76.6& 37.9 & 44.6& \textbf{13.9}\\
         \textbf{sAT} & PRN-18 & 86.4 & 61.0 & 57.4 & 84.1 & 82.0 & 81.1& 78.2 & 77.6& \textbf{57.4} \\
         \textbf{sTRADES} & PRN-18  & 89.8 & 74.7 & 71.6 & 88.8 & 87.7 & 86.9& 85.9 & 84.5& \textbf{71.6} \\
         \bottomrule[1.5pt]
    \end{tabular}
    } }
\end{minipage}
\begin{minipage}{0.48\textwidth}
    \subtable[\small \textbf{CIFAR-100}]{
    \small
    \resizebox{\textwidth}{!}{
    \small
    \begin{tabular}{l c|c|c c| c c c c c |c}
        \toprule[1.5pt]
        \multirow{2}{*}{Model} & \multirow{2}{*}{Network} & \multirow{2}{*}{Clean} & \multicolumn{2}{c|}{Black} & \multicolumn{5}{c|}{White} & \multirow{2}{*}{\textbf{sAA}} \\
          & & & CS & RS  & SF & PGD$_0$ & SAIF & \textbf{sPGD$_{\mathrm{p}}$}& \textbf{sPGD$_{\mathrm{u}}$}& \\
         \midrule[1pt]
         Vanilla & RN-18 & 74.3 & 1.6 & 0.3 & 20.1 & 1.9 & 9.0 & 0.1 & 0.9 &   \textbf{0.1}\\
         \midrule[1pt]
         \multicolumn{10}{l}{$l_\infty$-adv. trained, $\epsilon=$ 8/255}\\
         \midrule[0.5pt]
         HAT & PRN-18 & 61.5 & 12.6 & 9.3 & 39.1 & 19.1 &26.8 & 11.6 & 14.2 & \textbf{8.5}\\
         FDA & PRN-18 & 56.9 & 16.3 & 12.3 & 42.2 & 23.0 & 30.7 & 14.9 & 17.8 & \textbf{11.6}\\
         DKL & WRN-28 & 73.8 & 12.4 & 6.3 & 44.9 & 20.9& 26.5 &  10.5 & 14.0 &  \textbf{6.1 }\\
         DM& WRN-28 & 72.6  &14.0 & 8.2  & 46.2 & 23.4 & 29.8 & 12.7 & 15.8 & \textbf{8.0}\\
         
         \midrule[1pt]
         \multicolumn{10}{l}{$l_1$-adv. trained, $\epsilon=$ 6}\\
         \midrule[0.5pt]
         $l_1$-APGD & PRN-18 & 63.2  & 22.7 & 22.1 &47.7 & 33.0 & 43.5 & 19.7 & 20.3 & \textbf{18.5}\\
         Fast-EG-$l_1$ & PRN-18 &59.4  &21.5 & 21.0 &44.8&30.6 & 39.5& 18.9 & 18.6& \textbf{17.3}\\
         \midrule[1pt]
         \multicolumn{10}{l}{$l_0$-adv. trained, $\epsilon=$ 10}\\
         \midrule[0.5pt]
         PGD$_0$-A & PRN-18 & 66.1 & 9.3 & 7.1 & 57.9 & 29.9 & 39.5& 13.9& 20.4 & \textbf{6.5}\\
         PGD$_0$-T & PRN-18 & 70.7 & 14.8 & 10.5 & 63.5&  46.3& 51.7& 24.5& 28.6 & \textbf{10.2}\\
         \textbf{sAT} & PRN-18 & 67.0 & 44.3 & 41.6 & 65.9 & 61.6 & 60.9& 56.8& 58.0 & \textbf{41.6}\\
         \textbf{sTRADES} & PRN-18 & 70.9 & 52.8 & 50.3 & 69.2 & 67.2 & 65.2 & 64.0& 63.7& \textbf{50.2}\\
         \bottomrule[1.5pt]
    \end{tabular}
    }}
\end{minipage}
\end{table*}

\subsection{Ablation Studies} \label{sec:ab}
We conduct ablation studies in this section. We focus on CIFAR-10 and the sparsity level $\epsilon = 20$. Unless specified, we use the same configurations as in Table~\ref{tab:20}.

\begin{table}[htb]
    \vspace{-1em}
    \centering
    \caption{\small Ablation study of each component in sPGD$_{\mathrm{proj}}$ in terms of robust accuracy. The model is Fast-EG-$l_1$ trained on CIFAR-10.
    }
    \small
    \begin{tabular}{l | c}
        \toprule[1.5pt]
        Ablations & Robust Acc.\\
        \midrule[0.5pt]
        Baseline (PGD$_0$ w/o restart) & 49.4\\
         \quad\quad + Decomposition: $\bm{\delta} = \vp \odot \vm$ & 58.0 (\red{+8.6})\\
         \quad\quad + Continuous mask $\widetilde{\vm}$& 33.9  (\green{-15.5})\\
         \quad\quad + Random reinitialization & \textbf{18.1 (\green{-31.3})}\\
         \bottomrule[1.5pt]
    \end{tabular}
    \label{tab:ab_pgd}
\end{table}

\textbf{sPGD:}
We first validate the effectiveness of each component in sPGD. The result is reported in Table \ref{tab:ab_pgd}. We observe that naively decomposing the perturbation $\bm{\delta}$ by $\bm{\delta} = \vp \odot \vm$ and updating them separately can deteriorate the performance.
By contrast, the performance significantly improves when we update the mask $\vm$ by its continuous alternative $\widetilde{\vm}$ and $l_0$ ball projection. This indicates that introducing $\widetilde{\vm}$ greatly mitigates the challenges in optimizing discrete variables.
Moreover, the results in Table~\ref{tab:ab_pgd} indicate the performance can be further improved by the random reinitialization mechanism, which encourages exploration and avoids trapping in a local optimum.

In addition, we compare the performance when we use different step sizes for the magnitude tensor $\vp$ and the sparsity mask $\vm$. As shown in Table \ref{tab:ab_alpha} and \ref{tab:ab_beta}, the robust accuracy does not vary significantly with different step sizes. It indicates the satisfying robustness of our method to different hyperparameter choices. In practice, We set $\alpha$ and $\beta$ to $0.25$ and $0.25\times\sqrt{hw}$, respectively. Note that $h$ and $w$ denote the height and width of the image, respectively, which are both $32$ in CIFAR-10.
\begin{table}[ht]
    \vspace{-1em}
    \caption{\small Robust accuracy at different step sizes $\alpha$ for magnitude $p$. The evaluated attack is sPGD$_{\mathrm{proj}}$. The model is Fast-EG-$l_1$ trained on CIFAR-10.}
    \centering
    \small
    \label{tab:ab_alpha}
    \begin{tabular}{c c c c c c c}
        \toprule[1.5pt]
        $\alpha$ & $\frac{1}{16}$ & $\frac{1}{8}$ & $\frac{1}{4}$ & $\frac{1}{2}$ & $\frac{3}{4}$ & $1$\\
        \midrule[1pt]
        Acc. & 19.6 &18.8  & \textbf{18.1} & 18.4
        & 18.7 & 19.0\\
        \bottomrule[1.5pt]
        \vspace{-2em}
    \end{tabular}
\end{table}

\begin{table}[ht]
    \vspace{-1em}
    \centering
    \makeatletter\def\@captype{table}\makeatother \caption{\small Robust accuracy at different step sizes $\beta$ for mask $m$. The evaluated attack is sPGD$_{\mathrm{proj}}$. The model is Fast-EG-$l_1$ trained on CIFAR-10. Note that $h$ and $w$ are both $32$ in CIFAR-10.
    }
    \small
    \label{tab:ab_beta}
    \begin{tabular}{c c c c c c c}
        \toprule[1.5pt]
        $\beta/\sqrt{hw}$ & $\frac{1}{16}$ & $\frac{1}{8}$ & $\frac{1}{4}$ & $\frac{1}{2}$ & $\frac{3}{4}$ & $1$\\
        \midrule[1pt]
        Acc. & 20.0&  19.3& \textbf{18.1} & 18.4 & 19.4& 21.0\\
        \bottomrule[1.5pt]
    \end{tabular}
\end{table}

Ultimately, we compare the performance of sPGD with different tolerance iterations for reinitialization. As shown in Table \ref{tab:ab_tol}, the performance of our method remains virtually unchanged, which showcases that our approach is robust to different choices of tolerance for reinitialization.
\begin{table}[ht]
    \vspace{-1em}
    \centering
        \makeatletter\def\@captype{table}\makeatother \caption{\small Robust accuracy at different tolerance for reinitialization $t$ during attacking. The evaluated attack is sPGD$_{\mathrm{proj}}$. The model is Fast-EG-$l_1$ trained on CIFAR-10.}
        \label{tab:ab_tol}
        \small
        \begin{tabular}{c|c c c c c}
            \toprule[1.5pt]
            $t$ & 1& 3 & 5 & 7 & 10 \\
            \midrule[1pt]
            Acc. & 18.1 & \textbf{18.1} & 18.5 & 18.5& 18.5 \\
            \bottomrule[1.5pt]
        \end{tabular}
        \vspace{-1em}
\end{table}

\textbf{Adversarial training:}
We conduct preliminary exploration on adversarial training against sparse perturbations, since sPGD can be incorporated into any adversarial training variant.
Table~\ref{tab:20},~\ref{tab:more} and \ref{tab:10_15} study sAT and sTRADES, while sTRADES outperforms sAT in all cases. In addition, the training of sAT is relatively unstable in practice, so we focus on sTRADES for ablation studies in this section.
We leave the design of sPGD-adapted adversarial training variants to further improve model robustness as a future work.

Table~\ref{tab:ab_adv2} demonstrates the performance when we use different backward functions. The policies include always using the sparse gradient (Proj.), always using the unprojected gradient (Unproj.), alternatively using both backward functions every $5$ epochs (Alter.) and randomly selecting backward functions (Rand.). The results indicate that randomly selecting backward functions has the best performance.
In addition, Table~\ref{tab:ab_adv1} demonstrates the robust accuracy of models trained by sTRADES with different multi-$\epsilon$ strategies. The results indicate that multi-$\epsilon$ strategy helps boost the performance.
The best robust accuracy is obtained when the adversarial budget for training is $6$ times larger than that for test.
Furthermore, we also study the impact of different tolerances during adversarial training in Table \ref{tab:ab_adv3}. The results show that higher tolerance during adversarial training benefits the robustness of the model, and the performance reaches its best when the tolerance is set to $10$.
\begin{table}[t]
    \vspace{-1em}
    \centering
    \caption{\small Ablation study on different policies during adversarial training. The model is PRN18 trained by sTRADES with $6\times k$. The robust accuracy is obtained through sAA.
    }
    \label{tab:ab_adv2}
    \small
    \begin{tabular}{c| c c c c}
        \toprule[1.5pt]
       Policy & Proj. & Unproj. & Alter. & Rand.\\
         \midrule[1pt]
         Acc. & 41.5 & 39.4 & 51.5 & \textbf{61.7}\\
         \bottomrule[1.5pt]
    \end{tabular}
    \vspace{-0.5em}
\end{table}
\begin{table}[t]
    \centering
    \caption{\small Ablation study on multi-$\epsilon$ strategy during adversarial training. The model is PRN18 trained by sTRADES with random policy. The robust accuracy is obtained through sAA.}
    \small
    \label{tab:ab_adv1}
    \begin{tabular}{c| c c c c c c}
        \toprule[1.5pt]
        $\epsilon$ & $1\times$ & $2\times$ & $4\times$ & $6\times$& $8\times$ & $10\times$\\
        \midrule[1pt]
        Acc. & 34.0 & 39.7 & 54.5 & \textbf{61.7} & 60.2 & 55.5\\
        \bottomrule[1.5pt]
    \end{tabular}
    \vspace{-0.5em}
\end{table}
\begin{table}[!h]
    \centering
    \makeatletter\def\@captype{table}\makeatother \caption{\small Ablation study on tolerance for reinitialization $t$ during adversarial training. The model is PRN18 trained by sTRADES with 6 $\times k$ and tolerance $t=$ 3. The robust accuracy is obtained through sAA.}
    \label{tab:ab_adv3}
    \small
    \begin{tabular}{c|c c c}
        \toprule[1.5pt]
        $t$ & 3 & 10 & 20 \\
        \midrule[1pt]
        Acc. & 51.7 & \textbf{61.7} & 60.0\\
        \bottomrule[1.5pt]
    \end{tabular}
    \vspace{-1em}
\end{table}

\textbf{Structured Sparse Perturbations:}
Since the optimal hyperparameters for structured sparse perturbations differ from those for unstructured $l_0$ bounded perturbations, we conduct further ablation studies in the structured sparse setting.

The results in Table \ref{tab:ab_alpha_struct}, \ref{tab:ab_beta_struct} and \ref{tab:ab_tol_struct} suggest that the step size $\alpha$, $\beta$ and the tolerance $t$ for generating structured sparse perturbations should be smaller than those for unstructured cases. This can be attributed to the requirement for more exploitable attacks when generating structured sparse perturbations, similar to conventional $l_\infty$ and $l_2$ attacks \citep{croce2020reliable}.

The multi-$\epsilon$ strategy has been proven effective in $l_0$ adversarial training. However, in the context of structured sparse attack, particularly in the patch setting, we can increase either the sparsity level $\epsilon$ or the patch size to increase the corresponding $l_0$ adversarial budget. Therefore, we compare two different strategies here, i.e., multi-$\epsilon$ strategy and multi-size strategy. Specifically, the multi-$\epsilon$ strategy increases the sparsity level $\epsilon$ while maintaining the patch size unchanged during training, but the multi-size strategy only increases the patch size to make the $l_0$ sparsity level of generated perturbations similar to that in the multi-$\epsilon$ strategy. The results reported in Table \ref{tab:ab_struct_adv} indicate that the multi-$\epsilon$ strategy provides better performance.

\begin{table}[!h]
    \vspace{-0.5em}
    \caption{\small Robust accuracy at different step sizes $\alpha$ for magnitude $p$. The evaluated attack is sPGD$_{\mathrm{proj}}$ generating $5\times5$ patch ($\epsilon=$ 1). The model is sTRADES-$l_0$trained on CIFAR-10.
    }
    \centering
    \small
    \label{tab:ab_alpha_struct}
    \begin{tabular}{c c c c c c c}
        \toprule[1.5pt]
        $\alpha$  & $0.01$ & $0.0125$ & $0.05$ & $0.1$ & $0.15$ & $0.25$\\
        \midrule[1pt]
       Acc. &28.1 &  \textbf{26.7}& 33.8 & 41.5 & 47.9& 58.1\\
        \bottomrule[1.5pt]
    \end{tabular}
\end{table}

\begin{table}[!h]
    \vspace{-1em}
    \centering
    \makeatletter\def\@captype{table}\makeatother \caption{\small Robust accuracy at different step sizes $\beta$ for mask $m$. The evaluated attack is sPGD$_{\mathrm{p}}$ generating 5$\times$5 patch ($\epsilon=$ 1). The model is sTRADES-$l_0$ trained on CIFAR-10. Note that $h$ and $w$ are both $32$ in CIFAR-10.
    }
    \small
    \label{tab:ab_beta_struct}
    \begin{tabular}{c c c c c c c}
        \toprule[1.5pt]
        $\beta/\sqrt{hw}$ & $0.01$ & $0.0125$ & $0.05$ & $0.1$ & $0.15$ & $0.25$\\
        \midrule[1pt]
        Acc. & 27.4&  \textbf{26.7}& 34.1 & 37.5 & 38.3& 35.1\\
        \bottomrule[1.5pt]
    \end{tabular}
\end{table}

\begin{table}[!h]
    \vspace{-1em}
    \centering
        \makeatletter\def\@captype{table}\makeatother \caption{\small Robust accuracy at different tolerance for reinitialization $t$ during attacking. The evaluated attack is sPGD$_{\mathrm{p}}$ generating 5$\times$5 patch ($\epsilon=$ 1). The model is sTRADES-$l_0$ trained on CIFAR-10.}
        \label{tab:ab_tol_struct}
        \small
        \begin{tabular}{c|c c c c c}
            \toprule[1.5pt]
            $t$ & 3& 10 & 50 & 100 & 150 \\
            \midrule[1pt]
            Acc. & 36.1 & 28.7 & \textbf{26.7} &  28.1 & 33.7 \\
            \bottomrule[1.5pt]
        \end{tabular}
\end{table}

\begin{table}[!h]
    \centering
    \caption{\small Ablation study on multi-$\epsilon$ / multi-size strategy during adversarial training against 3$\times$3 patch ($\epsilon=2$). The model is PRN18 trained by sTRADES with random policy.}
    \small
    \label{tab:ab_struct_adv}
    \begin{tabular}{c| c c c c c}
        \toprule[1.5pt]
       Attack & RS & LOAP & sPGD$_{\mathrm{p}}$& sPGD$_{\mathrm{u}}$& sAA\\
        \midrule[1pt]
        Multi-$\epsilon$ & \textbf{77.5} & \textbf{81.4} & \textbf{72.8} & \textbf{81.7} & \textbf{72.3}\\
        Multi-size & 75.5 & 79.2 & 68.4 & 79.8 & 67.9\\
        \bottomrule[1.5pt]
    \end{tabular}
\end{table}

\subsection{More Visualization} \label{sec:app_visual}

More adversarial examples are shown in Figure \ref{fig:visual_app}. Additionally, we showcase some $l_0$ bounded adversarial examples generated on our sAT model in Figure \ref{fig:visual_adv}. Compared to Figure \ref{fig:visual} (a)-(b) and Figure \ref{fig:visual_app} (a)-(b), the perturbations generated on the adversarially model are mostly located in the foreground of images. It is consistent with the intuition that the foreground of an image contains most of the semantic information \cite{imtiaz2022saif}.

\begin{figure*}[ht]
    \centering
    \subfigure[$l_0$, $\epsilon=100$]{\includegraphics[width=0.21\textwidth]{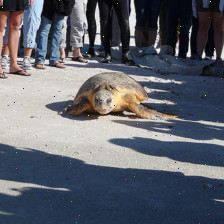}}
    \subfigure[$l_0$, $\epsilon=100$]{\includegraphics[width=0.21\textwidth]{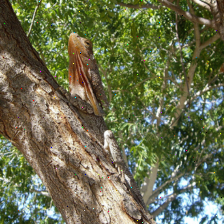}}
    \subfigure[column, $\epsilon=1$]{\includegraphics[width=0.21\textwidth]{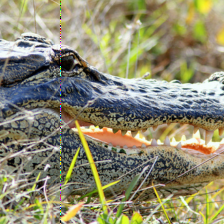}}
    \subfigure[column, $\epsilon=1$]{\includegraphics[width=0.21\textwidth]{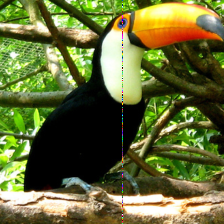}}
    \\
    \subfigure[patch, $14\times14$, $\epsilon=1$]{\includegraphics[width=0.21\textwidth]{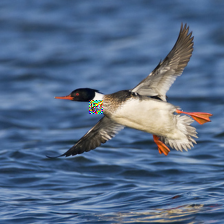}}
    \subfigure[patch, $14\times14$, $\epsilon=1$]{\includegraphics[width=0.21\textwidth]{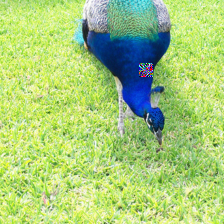}}
    \subfigure[patch, $10\times10$, $\epsilon=2$]{\includegraphics[width=0.21\textwidth]{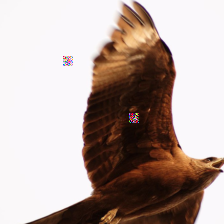}}
    \subfigure[patch, $10\times10$, $\epsilon=2$]{\includegraphics[width=0.21\textwidth]{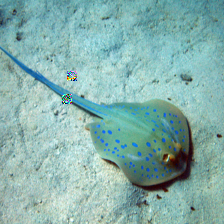}}
    \\
    \subfigure[heart, $20\times20$, $\epsilon=1$]{\includegraphics[width=0.21\textwidth]{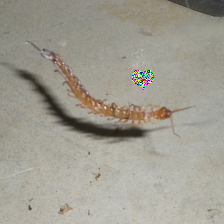}}
    \subfigure[heart, $20\times20$, $\epsilon=1$]{\includegraphics[width=0.21\textwidth]{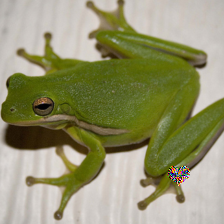}}
    \subfigure[circle, $60\times60$, $\epsilon_\infty=64/255$]{\includegraphics[width=0.21\textwidth]{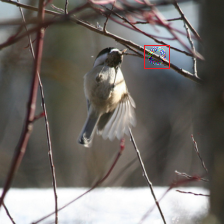}}
    \subfigure[circle, $60\times60$, $\epsilon_\infty=64/255$]{\includegraphics[width=0.21\textwidth]{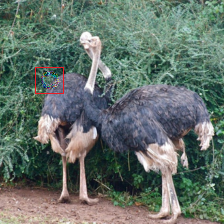}}
    \caption{\small Visualization of different sparse adversarial examples in ImageNet-100. The model is vanilla ResNet-34. The predictions before (left) and after (right) attack are (a) loggerhead$\rightarrow$stingray, (b) frilled lizard$\rightarrow$hornbill, (c) American alligator$\rightarrow$African crocodile, (d) toucan$\rightarrow$hornbill, (e) red-breasted merganser$\rightarrow$drake, (f)peacock$\rightarrow$cock, (g) kite$\rightarrow$vulture, (h) stingray$\rightarrow$electric ray, (i) centipede$\rightarrow$stingray, (j) tree frog$\rightarrow$tailed frog, (k) chickadee$\rightarrow$junco, and (l) ostrich$\rightarrow$peacock. \textbf{Note that the red squares in (k) and (l) are just for highlighting the perturbation position and not part of perturbations.}}
    \label{fig:visual_app}
\end{figure*}
\begin{figure*}[ht]
    \centering
    \subfigure[$l_0$, $\epsilon=200$]{\includegraphics[width=0.21\textwidth]{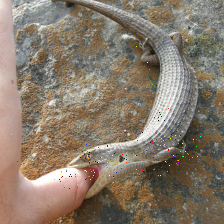}}
    \subfigure[$l_0$, $\epsilon=200$]{\includegraphics[width=0.21\textwidth]{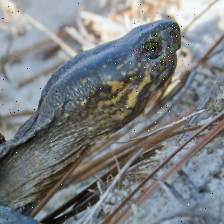}}
    \subfigure[$l_0$, $\epsilon=200$]{\includegraphics[width=0.21\textwidth]{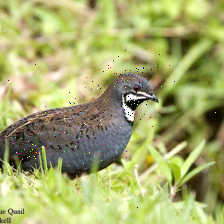}}
    \subfigure[$l_0$, $\epsilon=200$]{\includegraphics[width=0.21\textwidth]{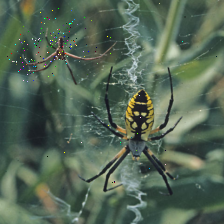}}
    \caption{\small Visualization of successful sparse adversarial examples in ImageNet-100. The model is ResNet-34 trained with sAT-$l_0$. The predictions before (left) and after (right) attack are (a) alligator lizard$\rightarrow$water snake, (b) mud turtle$\rightarrow$box turtle, (c) quail$\rightarrow$partridge, and (d) black and gold garden spider$\rightarrow$garden spider.}
    \label{fig:visual_adv}
\end{figure*}
\end{document}